\newtheorem{theorem}{Theorem}[section]
\newtheorem{lemma}[theorem]{Lemma}
\newtheorem{proposition}[theorem]{Proposition}
\newtheorem{corollary}[theorem]{Corollary}
\newtheorem{Definition}[theorem]{Definition}
\newtheorem{Example}[theorem]{Example}
\newtheorem{Remark}[theorem]{Remark}
\newenvironment{example}{\begin{Example}\begin{em}}{\end{em}\end{Example}}
\newenvironment{proof}{
	
	\smallskip
	
	\noindent
	{\em Proof.}}{
	
	\smallskip
	
}
\def\eqref#1{(\ref{#1})}
\def\tuple#1{\langle#1\rangle}
\newcommand{\mand}{\sqcap}
\newcommand{\mor}{\sqcup}
\newcommand{\V}{\forall}
\newcommand{\E}{\exists}
\newcommand{\fALCP}{\mbox{$f\!\mathcal{ALC}_\Phi$}\xspace}
\newcommand{\fALCPp}{\mbox{$f\!\mathcal{ALC}_\Phi^0$}\xspace}
\newcommand{\mT}{\mathcal{T}}
\newcommand{\mA}{\mathcal{A}}
\newcommand{\mI}{\mathcal{I}}
\newcommand{\mJ}{\mathcal{J}}
\newcommand{\mIp}{{\mathcal{I}'\!}}
\newcommand{\mZ}{\mathcal{Z}}
\newcommand{\ALC}{$\mathcal{ALC}$\xspace}
\newcommand{\CN}{\mathbf{C}}
\newcommand{\RN}{\mathbf{R}}
\newcommand{\IN}{\mathbf{I}}
\newcommand{\Self}{\mathtt{Self}}
\newcommand{\equivP}{\equiv_\Phi}
\newcommand{\myend}{\mbox{}\hfill{\scriptsize$\blacksquare$}}
\newcommand{\comment}[1]{}
\newcommand{\deleted}[1]{}
\newcommand{\fand}{\varotimes}
\newcommand{\fOr}{\varoplus}
\newcommand{\fneg}{\varominus}
\newcommand{\fto}{\Rightarrow}
\newcommand{\IZOcc}{L}
\newcommand{\cnv}[1]{{#1}^-}
\newcommand{\simP}{\sim_\Phi}
\newcommand{\mybigsqcap}{\bigsqcap}
\newcommand{\FDL}{FDL\xspace}
\newcommand{\FDLs}{FDLs\xspace}
\newcommand{\simPI}{\sim_{\Phi,\mI}}
\newcommand{\mIsimP}{\mI/_{\simP}}
\newcommand{\mIpsimP}{\mIp/_{\simP}}
\newcommand{\Label}{\mathit{Label}}
\newcommand{\SV}{\Sigma_V}
\newcommand{\SE}{\Sigma_E}
\newcommand{\bbP}{\mathbb{P}}
\newcommand{\bbQ}{\mathbb{Q}}
\newcommand{\bbS}{\mathbb{S}}
\newcommand{\itSplit}{\mathit{split}}
\newcommand{\CompCB}{\mbox{$\mathsf{ComputeBisimulation}$}\xspace}
\newcommand{\blockEdge}{\mathit{blockEdge}}
\newcommand{\maxKey}{\mathit{maxKey}}
\newcommand{\MinInt}{\mbox{$\mathsf{MinimizeFuzzyInterpretation}$}\xspace}
\newcommand{\email}[1]{\mbox{Email: \url{#1}}}
\def\ramka#1{\begin{center}\fbox{\parbox{\textwidth-0.8em}{#1}}\end{center}}
\begin{document}
\sloppy
	
\title{Minimizing Fuzzy Interpretations in Fuzzy Description Logics by Using Crisp Bisimulations}
		
\author{Linh Anh Nguyen}

\affil{\small Institute of Informatics, University of Warsaw, Banacha 2, 02-097 Warsaw, Poland, \email{nguyen@mimuw.edu.pl}}

\affil{\small
	Faculty of Information Technology, Nguyen Tat Thanh University, Ho Chi Minh City, Vietnam
}

\date{}

\maketitle

\begin{abstract}
The problem of minimizing finite fuzzy interpretations in fuzzy description logics (FDLs) is worth studying. For example, the structure of a fuzzy/weighted social network can be treated as a fuzzy interpretation in FDLs, where actors are individuals and actions are roles. Minimizing the structure of a fuzzy/weighted social network makes it more compact, thus making network analysis tasks more efficient. 
In this work, we study the problem of minimizing a finite fuzzy interpretation in a~FDL by using the largest crisp auto-bisimulation.  The considered FDLs use the Baaz projection operator and their semantics is specified using an abstract algebra of fuzzy truth values, which can be any linear and complete residuated lattice. 
We provide an efficient algorithm with a complexity of $O((m \log{l} + n) \log{n})$ for minimizing a given finite fuzzy interpretation $\mI$, where $n$ is the size of the domain of $\mI$, $m$ is number of nonzero instances of atomic roles of $\mI$ and $l$ is the number of different fuzzy values used for instances of atomic roles of~$\mI$. 
We prove that the fuzzy interpretation returned by the algorithm is minimal among the ones that preserve fuzzy TBoxes and ABoxes under certain conditions.

\medskip

\noindent {\em Keywords:} fuzzy description logic, bisimulation, minimization.
\end{abstract}


\section{Introduction}
\label{section:intro}

Two states $x$ and $x'$ in two automata are bisimilar if the following three conditions are satisfied: i)~$x$~is an accepting state iff $x'$ is an accepting state; ii)~if there is a $\sigma$-transition (i.e., a transition caused by an action $\sigma$) from~$x$ to a state~$y$, then there must also be a $\sigma$-transition from~$x'$ to a state~$y'$ such that $y$ and $y'$ are bisimilar; iii)~and vice versa. 
Similarly, two possible worlds $x$ and $x'$ in two Kripke models are bisimilar if the following three conditions are met: i)~a~proposition (an atomic formula) is true at $x$ iff it is true at $x'$; ii)~if a possible world $y$ is accessible from $x$ via a relation $\sigma$, then there must exist a possible world $y'$ accessible from $x'$ via the relation $\sigma$ such that $y$ and $y'$ are bisimilar; iii)~and vice versa. 
The bisimilarity relation defined in this way is maximal by its nature. If we define a bisimulation $Z$ in a similar way by replacing a phrase like ``$x$ and $x'$ are bisimilar'' with ``$Z(x,x')$ holds'', then $Z$ is not required to be maximal. Thus, the bisimilarity relation is the largest bisimulation. 

The bisimilarity relation between states of the same automaton (respectively, possible worlds in the same Kripke model), also known as the largest auto-bisimulation, is an equivalence relation. If two states in a finite automaton are bisimilar, we can merge them to reduce the size of the automaton. However, merging each group of bisimilar states in a finite automaton does not result in a minimal automaton. This is because preserving only the recognized language requires a stronger reduction to make a finite automaton minimal in the general case. On the other hand, bisimilarity-based merging can be used to minimize a Kripke model while preserving the set of all formulas that are true in all possible worlds of the model. 

Description logics are logical formalisms used for representing and reasoning about terminological knowledge~\cite{dlbook}. They are variants of modal logics, where interpretations, individuals, concepts and roles take the place of Kripke models, possible worlds, formulas and accessibility relations, respectively. In essence, a concept is interpreted as a set of individuals and a role is interpreted as a binary relation between individuals. A knowledge base in a description logic usually consists of a TBox (Terminological Box) and an ABox (Assertional Box), where the TBox is a set of axioms about concepts and roles, while the ABox is a set of assertions about individuals.

In practice, data may be vague or fuzzy and we can use fuzzy description logics (\FDLs)~\cite{BobilloCEGPS2015,BorgwardtP17b} instead of description logics. The semantics of the basic concept constructors $\E R.C$ and $\V R.C$ (existential and universal restrictions), which correspond to the modal formulas $\tuple{R}C$ and $[R]C$ in modal logics, are defined in FDLs using the fuzzy operators $\fand$ and $\fto$, respectively (see Section~\ref{sec: FDLs}). Other fuzzy operators such as $\fOr$ and $\fneg$ may also be used to define the semantics of other concept constructors. Initially, FDLs were mostly studied using the Zadeh family of fuzzy operators. Nowadays, FDLs are usually studied using a t-norm $\fand$ on the unit interval $[0,1]$ and its corresponding residuum~$\fto$, where the G\"odel family of fuzzy operators typically provides the simplest case. More general settings for FDLs may use residuated lattices~\cite{Hajek1998,Belohlavek2002,fss/Nguyen22} or so-called algebras of fuzzy truth values~\cite{TFS2022}.

In this work, we consider the problem of minimizing a finite fuzzy interpretation in a FDL by using the largest/greatest auto-bisimulation. There are the following related questions:
\begin{enumerate}
\item What class of FDLs is considered?
\item How can we minimize a finite fuzzy interpretation in the given FDL? How can we do this efficiently? What is the complexity of the algorithm?
\item With regard to what criteria is the resulting fuzzy interpretation minimal? What are logical properties of the minimized fuzzy interpretation?
\end{enumerate}

\subsection{Related Work}

There are two kinds of bisimulations, {\em crisp} and {\em fuzzy}, for fuzzy modal logics and FDLs (see~\cite{Fan15,FSS2020,fss/Nguyen22,TFS2022}). Restricting to the case of using the G\"odel family of fuzzy operators on the unit interval $[0,1]$, logical characterizations of crisp bisimulations between fuzzy Kripke models or fuzzy interpretations in FDLs are expressed in a logical language with involutive negation or the Baaz projection operator. When moving to residuated lattices, the use of the Baaz projection operator is more suitable~\cite{fss/Nguyen22}. 
Crisp bisimulations have been studied for fuzzy transition systems~\cite{CaoCK11,CaoSWC13,DBLP:journals/fss/WuCBD18,DBLP:journals/fss/WuD16}, weighted automata~\cite{DamljanovicCI14}, Heyting-valued modal logics~\cite{EleftheriouKN12}, fuzzy/many-valued modal logics~\cite{Fan15,aml/MartiM18,fuin/Diaconescu20} and \FDLs~\cite{FSS2020,TFS2022}. 
Fuzzy bisimulations have been studied for fuzzy automata~\cite{CiricIDB12,AFB2020,IJA2023}, weighted/fuzzy social networks~\cite{ai/FanL14,IgnjatovicCS15}, fuzzy modal logics~\cite{Fan15,fss/Nguyen22} and \FDLs~\cite{FSS2020}. 

In the preprint~\cite{abs-2010-15671}, together with a coauthor we designed an efficient algorithm with a complexity of $O((m\log{l} + n)\log{n})$ for computing the crisp partition that corresponds to the largest crisp bisimulation of a given finite fuzzy labeled graph, where $n$, $m$ and $l$ are the number of vertices, the number of nonzero edges and the number of different fuzzy degrees of edges of the input graph, respectively. We also provided an algorithm with a complexity of $O((m\log{m} + n)\log{n})$ for a similar problem for the setting with counting successors, which corresponds to the case with qualified number restrictions in description logics and graded modalities in modal logics.
In~\cite{INS2023}, we gave an efficient algorithm with a complexity of $O((m\log{l} + n)\log{n})$ for computing the fuzzy partition that corresponds to the greatest fuzzy auto-bisimulation of a finite fuzzy labeled graph under the G\"odel semantics. We introduced and used a novel notion of a fuzzy partition, which is defined only for finite sets w.r.t.\ the G\"odel t-norm. 

For the background, recall that Hopcroft~\cite{Hopcroft71} gave an efficient algorithm with a complexity of $O(n\log{n})$ for minimizing states in a crisp deterministic finite automaton, Paige and Tarjan~\cite{PaigeT87} gave an efficient algorithm with a complexity of $O((m+n)\log{n})$ for computing the coarsest crisp partition of a finite crisp graph.

\subsection{Motivation}

First, the problem of minimizing finite fuzzy interpretations in FDLs is worth studying. For example, the structure of a fuzzy/weighted social network can be treated as a fuzzy interpretation in FDLs, where actors are individuals and actions are roles. Minimizing the structure of a fuzzy/weighted social network makes it more compact, thus making network analysis tasks more efficient. 
Second, despite the fact that bisimulations have been studied for fuzzy modal logics and fuzzy/weighted social networks~\cite{ai/FanL14,Fan15,IgnjatovicCS15}, we are not aware of works devoted to minimizing fuzzy Kripke models or using the largest/greatest bisimulation to minimize the structure of a fuzzy/weighted social network.
Third, as discussed below, the results of the earlier works~\cite{FSS2020,minimization-by-fBS} on minimizing a finite fuzzy interpretation in a FDL by using the largest/greatest bisimulation have a quite restrictive scope. 

The work \cite{minimization-by-fBS} concerns the problem of minimizing a finite fuzzy interpretation in a FDL under the G\"odel semantics by using the greatest fuzzy auto-bisimulation. The minimization is done by taking the quotient fuzzy interpretation w.r.t.\ the fuzzy equivalence relation that is the greatest fuzzy auto-bisimulation. 
While the minimality w.r.t.\ the preservation of fuzzy TBoxes has been proved for a general case, the minimality w.r.t.\ the preservation of fuzzy ABoxes has been proved only for a very restrictive case where the universal role is used, and it does not hold for the general case. The work \cite{minimization-by-fBS} does not provide any algorithm for computing the quotient fuzzy interpretation. 

Section~6 of the work~\cite{FSS2020} specifies how to minimize a finite fuzzy interpretation in a FDL that uses the G\"odel family of fuzzy operators on the unit interval $[0,1]$ together with involutive negation. The minimization is done by taking the quotient fuzzy interpretation w.r.t.\ the equivalence relation that is the largest crisp auto-bisimulation. The minimality is proved w.r.t.\ certain criteria like the preservation of fuzzy TBoxes and ABoxes. The setting with involutive negation is essential and corresponds to the crispness of the largest auto-bisimulation. The work~\cite{FSS2020} has the following limitations:
\begin{itemize}
\item It does not consider other algebras of fuzzy truth values like the unit interval $[0,1]$ with another t-norm (e.g., product or {\L}ukasiewicz) or complete residuated lattices. Logical properties of minimized fuzzy interpretations are formulated and proved in~\cite{FSS2020} only for FDLs that use the G\"odel family of fuzzy operators on the unit interval $[0,1]$ together with involutive negation. 
\item The quotient fuzzy interpretation w.r.t.\ the equivalence relation that is the largest crisp auto-bisimulation is defined in~\cite{FSS2020} only theoretically. The work~\cite{FSS2020} does not provide any algorithm for computing it. The naive algorithm a usual reader can come up with for the computation is not efficient. 
\end{itemize}

It is desirable to generalize the results of~\cite[Section~6]{FSS2020} for a larger class of FDLs, especially w.r.t.\ the semantics, and to provide an efficient algorithm for the minimization task.

\subsection{Our Contributions}

In this work, by exploiting the results of~\cite{TFS2022} on logical characterizations of crisp bisimulations in FDLs we generalize the results of~\cite[Section~6]{FSS2020} for a large class of FDLs \fALCP with the Baaz projection operator. 
The semantics of the considered \FDLs is specified by using an abstract algebra of fuzzy truth values \mbox{$\tuple{L, \leq, \fand, \fOr, \fto, \fneg, \triangle, 0, 1}$} such that $\tuple{L,\leq}$ is a~linear complete lattice with $0$ and $1$ as the least and greatest elements, $\tuple{L,\fand,1}$ is a commutative monoid, $\fand$ is increasing w.r.t.\ both the arguments, $\fto$ is decreasing w.r.t.\ the first argument and increasing w.r.t.\ the second one, with $(x \fto y) = 1$ iff $x \leq y$, and 
$\triangle$ is the Baaz projection operator (i.e., $\triangle x = 1$ if $x = 1$, and $\triangle x = 0$ otherwise).
In comparison with residuated lattices \cite{Hajek1998,Belohlavek2002}, an algebra of fuzzy truth values is required to be complete and linear, but $\fand$ and $\fto$ are not required to form an adjoint pair.\footnote{Operators $\fand$ and $\fto$ form an adjoint pair if, for every $x, y, z \in L$, $x \fand y \leq z$ iff $x \leq (y \fto z)$.} Besides, such an algebra also contains the binary operator $\fOr$ and the unary operators $\fneg$ and $\triangle$, with no assumptions about~$\fOr$ and~$\fneg$. 
The considered \FDLs \fALCP extend the basic description logic \ALC with fuzzy truth values, the concept constructor \mbox{$C \to D$} and the features specified by $\Phi \subseteq \{\triangle$, $\circ$, $\mor_r$, $*$, $?$, $I$, $U$, $O\}$, with $\triangle \in \Phi$, where $\triangle$ and $O$ denote the Baaz projection operator and nominal, respectively, as concept constructors, while the remaining symbols denote role constructors: sequential composition~($\circ$), union~($\mor_r$), reflexive-transitive closure~($*$), test~($?$), inverse role~($I$) and universal role~($U$). 

We prove that, for $\mI$ being a finite fuzzy interpretation and $\mIsimP$ the quotient fuzzy interpretation of $\mI$ w.r.t.\ the equivalence relation that is the largest crisp $\Phi$-auto-bisimulation of~$\mI$, 
\begin{enumerate}
\item\label{item: HGDHJ 1} if $(1 \fto x) = x$ and ($y \fOr z = 0$ iff $y = z = 0$), for all fuzzy truth values $x$, $y$ and $z$, then $\mIsimP$ is a minimal fuzzy interpretation that validates the same set of fuzzy TBox axioms in \fALCP as $\mI$; 
\item\label{item: HGDHJ 2} if there are individual names and either $U \in \Phi$ or $\mI$ is connected w.r.t.~$\Phi$, then:\footnote{$\mI$ is connected w.r.t.~$\Phi$ if all of its individuals are reachable from the named individuals via roles of \fALCP.}
	\begin{enumerate}
	\item\label{item: HGDHJ 2a} $\mIsimP$ is a minimal fuzzy interpretation $\Phi$-bisimilar to $\mI$, 
	\item\label{item: HGDHJ 2b} $\mIsimP$ is a minimal fuzzy interpretation that validates the same set of fuzzy assertions of the form $C(a) \bowtie p$ in \fALCP as $\mI$, where $\bowtie\ \in \{\geq, >, \leq, <\}$ and $p$ is a fuzzy truth value.
	\end{enumerate}
\end{enumerate}
The premises of the first assertion hold, e.g., when $L = [0,1]$, $\fand$ is a t-norm, $\fto$ is its residuum and $\fOr$ is the corresponding s-norm. 
For the second assertion, it is usual to assume that there are individual names, but the premise $U \in \Phi$ is not a light assumption. If $U \notin \Phi$ and we want to minimize $\mI$ w.r.t.\ the criteria~\ref{item: HGDHJ 2a} and~\ref{item: HGDHJ 2b}, then we can first delete from $\mI$ all individuals unreachable from the named individuals via roles of \fALCP, to obtain a fuzzy interpretation $\mIp$, and then take~$\mIpsimP$.

Apart from the above mentioned results, which are a generalization of the results of~\cite[Section~6]{FSS2020}, we also provide an efficient algorithm with a complexity of $O((m \log{l} + n) \log{n})$ for computing the quotient fuzzy interpretation $\mIsimP$ from $\mI$, where $n$ is the size of the domain of $\mI$, $m$ is number of nonzero instances of atomic roles of $\mI$ and $l$ is the number of different fuzzy values used for instances of atomic roles of~$\mI$. 
It is based on the algorithm developed recently by us and a coauthor in the preprint~\cite{abs-2010-15671} for computing the crisp partition that corresponds to the largest crisp bisimulation of a given finite fuzzy labeled graph. 

\subsection{The Structure of This Work}

The rest of this work is structured as follows. In Section~\ref{section: prel}, we recall definitions and existing results that are needed for this work. In Section~\ref{section: minimization}, we specify how to minimize finite fuzzy interpretations and prove logical properties of the minimized fuzzy interpretations. In Section~\ref{section: computation}, we present our algorithm for the minimization task. We conclude this work in Section~\ref{sec: conc}.


\section{Preliminaries}
\label{section: prel}

In this section, we present basic definitions related to algebras of fuzzy truth values, FDLs and crisp bisimulations for FDLs by recalling them from~\cite{TFS2022}. We also present some results of~\cite{TFS2022} or their consequences, which are exploited in the further part of this work. 

\subsection{Algebras of Fuzzy Truth Values}

An {\em algebra of fuzzy truth values} is defined in \cite{TFS2022} as a tuple $\tuple{L, \leq, \fand, \fOr, \fto, \fneg, \triangle, 0, 1}$ such that $\tuple{L,\leq}$ is a~linear complete lattice with $0$ and $1$ as the least and greatest elements, $\fand$, $\fOr$ and $\fto$ are binary operators and $\fneg$ and $\triangle$ are unary operators on $L$ with the following axioms: 
\begin{eqnarray}
&& x \fand y = y \fand x \label{eq: fand-commutativity} \\
&& x \fand (y \fand z) = (x \fand y) \fand z \label{eq: fand-associativity} \\
&& x \fand 1 = x \label{eq: 1-identity} \\
&& \textrm{$x \fand y \leq z \fand t$\ \ if $x \leq z$ and $y \leq t$} \label{eq: fand-monotonicity} \\[1ex]
&& \textrm{$(x \fto y) \leq (z \fto t)$\ \ if $z \leq x$ and $y \leq t$} \label{eq: fto-monotonicity} \\
&& \textrm{$(x \fto y) = 1$ iff $x \leq y$} \label{eq: fto-when-1} \\[1ex]
%
&& \triangle x = \left\{\!\!
				 \begin{array}{ll}
				 1 & \textrm{if } x = 1, \\
				 0 & \textrm{otherwise.}
				 \end{array}
				 \right.
\label{eq: triangle}
\end{eqnarray}

Thus, $\tuple{L,\fand,1}$ is a commutative monoid, $\fand$ is increasing w.r.t.\ both the arguments, $\fto$ is decreasing w.r.t.\ the first argument and increasing w.r.t.\ the second argument and has an important property of residual implications, and $\triangle$ is the Baaz projection operator~\cite{Baaz96,Fan15}. 

When $L$ is the interval $[0,1]$, a binary operator $\fand$ on $L$ with the properties \eqref{eq: fand-commutativity}-\eqref{eq: fand-monotonicity} is called a {\em t-norm}, and if $\fand$ is a left-continuous t-norm, then the binary operator $\fto$ on $L$ defined by \mbox{$(x \fto y) =$} $\sup \{ z \mid z \fand x \leq y\}$ is called the {\em residuum} of~$\fand$. Every residuum has the properties~\eqref{eq: fto-monotonicity} and~\eqref{eq: fto-when-1}. 
The continuous t-norms $\fand$ named after G\"odel, Goguen (also known as product) and {\L}ukasiewicz are listed in Table~\ref{table: t-norms} together with their corresponding s-norms $\fOr$ and residua $\fto$. 

\begin{table}[h]
\center
\begin{tabular}{|c|c|c|c|}
	\hline
	& G\"odel & Product & {\L}ukasiewicz \\
	\hline
	$x \fand y$ & $\min\{x,y\}$ & $x \cdot y$ & $\max\{0, x+y-1\}$ \\
	\hline
	$x \fOr y$ & $\max\{x,y\}$ & $x + y - x \cdot y$ & $\min\{1, x+y\}$ \\
	\hline
	$x \fto y$ 
	& 
	\(
	\left\{
	\!\!\!\begin{array}{ll}
	1 & \textrm{if $x \leq y$} \\ 
	y & \textrm{otherwise}
	\end{array}\!\!\!
	\right.
	\)
	& 
	\(
	\left\{
	\!\!\!\begin{array}{ll}
	1 & \textrm{if $x \leq y$} \\ 
	y/x & \textrm{otherwise}
	\end{array}\!\!\!
	\right.
	\)	
	& $\min\{1, 1 - x + y\}$
	\\ \hline
\end{tabular}
\caption{Popular t-norms, s-norms and residua.\label{table: t-norms}}
\end{table}

From now on, let $\tuple{L, \leq, \fand, \fOr, \fto, \fneg, \triangle, 0, 1}$ be an arbitrary algebra of fuzzy truth values. 
Let the operators $<$, $\geq$ and $>$ on $L$ be defined in the usual way w.r.t.~$\leq$.

\begin{proposition}[{\cite[Proposition~1]{TFS2022}}]\label{prop: JHFJH}
The following properties hold for all \mbox{$x,y \in \IZOcc$}: 
\begin{eqnarray}
&& \textrm{$x \fand y \leq x$\ \ and\ \ $x \fand y \leq y$} \label{eq: req1} \\
&& \textrm{if $x = 0$ or $y = 0$, then $x \fand y = 0$} \label{eq: UYRJH} \\
&& \textrm{if $x \fand y > 0$, then $x > 0$ and $y > 0$} \label{eq: UYRJH2} \\
&& \textrm{$x \fand y = 1$\ \ iff\ \ $x = y = 1$}. \label{eq: req7}
\end{eqnarray}	
\end{proposition}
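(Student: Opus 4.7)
The plan is to derive each of the four properties \eqref{eq: req1}--\eqref{eq: req7} directly from the monoid structure and the monotonicity axiom of $\fand$, together with the fact that $\langle L,\leq\rangle$ is a lattice whose top and bottom are $1$ and $0$.

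First, for \eqref{eq: req1} I would combine the axioms \eqref{eq: 1-identity} and \eqref{eq: fand-monotonicity} with the fact that $y\leq 1$: by monotonicity, $x\fand y \leq x\fand 1 = x$. The symmetric inequality $x\fand y\leq y$ follows from commutativity \eqref{eq: fand-commutativity}. This single observation is the workhorse of the whole proposition.

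Second, \eqref{eq: UYRJH} is then immediate: if $x=0$, then by \eqref{eq: req1} we get $x\fand y \leq 0$, and since $0$ is the least element of $L$, equality holds; the case $y=0$ is symmetric (or follows via commutativity). Statement \eqref{eq: UYRJH2} is simply the contrapositive of \eqref{eq: UYRJH} rewritten using linearity of $\leq$ (so that $x\fand y >0$ is the negation of $x\fand y = 0$, and $x>0$ is the negation of $x=0$).

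Third, for \eqref{eq: req7}, the direction $x=y=1 \Rightarrow x\fand y =1$ follows from \eqref{eq: 1-identity}: $1\fand 1 = 1$. Conversely, if $x\fand y = 1$, then by \eqref{eq: req1} we have $1 = x\fand y \leq x$, and since $1$ is the greatest element we conclude $x=1$; the same argument gives $y=1$. There is no genuine obstacle here; the only mild subtlety is to notice that \eqref{eq: req1} itself needs \eqref{eq: fand-monotonicity} applied against the top element, rather than any residuation or adjointness property, since the algebra is not assumed to be adjoint.
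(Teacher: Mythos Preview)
Your argument is correct and complete. The paper itself does not supply a proof of this proposition; it is quoted verbatim from~\cite{TFS2022} and merely cited, so there is no in-paper proof to compare against. Your derivation---using monotonicity of $\fand$ against the top element to get~\eqref{eq: req1}, then reading off~\eqref{eq: UYRJH}, \eqref{eq: UYRJH2} and~\eqref{eq: req7} as immediate consequences---is the standard one and would be exactly what one expects the cited source to contain.
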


For a finite set $\Gamma = \{x_1,\ldots,x_n\} \subseteq \IZOcc$ with $n \geq 0$, define 
$\bigotimes\Gamma = x_1 \fand \cdots \fand x_n \fand 1$.

A {\em fuzzy subset} of a set $X$ is a function $f: X \to L$. 
If $f$ is a fuzzy subset of $X$ and $x \in X$, then $f(x)$ means the fuzzy degree in which $x$ belongs to the subset. 
For $\{x_1,\ldots,x_n\} \subseteq X$ and $\{a_1,\ldots,a_n\} \subset L$, we write $\{x_1\!:\!a_1$, \ldots, $x_n\!:\!a_n\}$ to denote the fuzzy subset $f$ of $X$ such that $f(x_i) = a_i$ for $1 \leq i \leq n$ and $f(x) = 0$ for $x \in X \setminus \{x_1,\ldots,x_n\}$. 


\subsection{Fuzzy Description Logics}
\label{sec: FDLs}

Let $\RN$ denote a set of {\em role names}, $\CN$ a set of {\em concept names}, and $\IN$ a set of {\em individual names}. Assume that they are pairwise disjoint. Let $\Phi$ be an arbitrary subset of $\{\triangle$, $\circ$, $\mor_r$, $*$, $?$, $I$, $U$, $O\}$ such that $\triangle \in \Phi$. Elements of $\Phi$ denote the additional features included in the \FDL \fALCP defined below. In comparison with~\cite{TFS2022}, we exclude unqualified/qualified number restrictions and the concept constructor $\E r.\Self$. 

A {\em basic role w.r.t.\ $\Phi$} is either a role name $r \in \RN$ or its inverse $\cnv{r}$ in the case $I \in \Phi$. 

{\em Roles} and {\em concepts} of the \FDL \fALCP are defined inductively as follows:
\begin{itemize}
	\item if $r \in \RN$, then $r$ is a role of \fALCP,
	\item if $R$ is a role of \fALCP, then
		\begin{itemize}
		\item if $*$ belongs to $\Phi$, then $R^*$ is a role of \fALCP, 
		\item if $I$ belongs to $\Phi$, then $\cnv{R}$ (the inverse of $R$) is a role of \fALCP, 
		\end{itemize}
	\item if $R$ and $S$ are roles of \fALCP, then 
		\begin{itemize}
		\item if $\circ$ belongs to $\Phi$, then $R \circ S$ is a role of \fALCP, 
		\item if $\mor_r$ belongs to $\Phi$, then $R \mor S$ is a role of \fALCP, 
		\end{itemize}
	\item if $C$ is a concept of \fALCP and $?$ belongs to $\Phi$, then $C?$ is a role of \fALCP, 
	\item if $U \in \Phi$, then $U$ (the universal role) is a role of \fALCP,   
	
	
	\item if $p \in \IZOcc$, then $p$ is a concept of \fALCP,
	\item if $A \in \CN$, then $A$ is a concept of \fALCP,
	\item if $C$ and $D$ are concepts of \fALCP and $R$ is a role of \fALCP, then $\triangle C$, $\lnot C$, $C \mand D$, $C \mor D$, $C \to D$, $\V R.C$ and $\E R.C$ are concepts of \fALCP,  
	\item if $O \in \Phi$ and $a \in \IN$, then $\{a\}$ is a concept of \fALCP.
\end{itemize}

We use letters $r$ and $s$ to denote role names, $R$ and $S$ to denote roles, $A$ and $B$ to denote concept names, $C$ and $D$ to denote concepts, and $a$ and $b$ to denote individual names.

Given a finite set $\Gamma = \{C_1,\ldots,C_n\}$ of concepts, we define $\mybigsqcap\Gamma = C_1 \mand \ldots \mand C_n \mand 1$.

\comment{By \fALCPp we denote the largest sublanguage of \fALCP that disallows the role constructors $R \mor S$, $R \circ S$, $R^*$, $(C?)$ and the concept constructors $\lnot C$, $C \mor D$, $\V R.C$, $<\!n\,R$, $<\!n\,R.C$, and uses $\to$ only in the form $p \to C$ or $C \to p$, with $p \in \IZOcc$.} 

A {\em fuzzy interpretation} is a pair $\mI = \langle \Delta^\mI, \cdot^\mI \rangle$, where $\Delta^\mI$ is a~non-empty set, called the {\em domain}, and $\cdot^\mI$ is the {\em interpretation function}, which maps each individual name $a$ to an element $a^\mI \in \Delta^\mI$, each concept name $A$ to a function $A^\mI : \Delta^\mI \to \IZOcc$, and each role name $r$ to a function \mbox{$r^\mI : \Delta^\mI \times \Delta^\mI \to \IZOcc$}. It is {\em finite} if $\Delta^\mI$, $\CN$, $\RN$ and $\IN$ are finite. 
%
The interpretation function~$\cdot^\mI$ is extended to complex roles and concepts as shown in Figure~\ref{fig: HDHGS}, for $x,y \in \Delta^\mI$.

\begin{figure}
\ramka{
\[
\begin{array}{rcl}
U^\mI(x,y) & \!\!=\!\! & 1 \\
(R^-)^\mI(x,y) & \!\!=\!\! & R^\mI(y,x) \\
(R \mor S)^\mI(x,y) & \!\!=\!\! & \max\{R^\mI(x,y),S^\mI(x,y)\} \\
(R \circ S)^\mI(x,y) & \!\!=\!\! & \sup\{R^\mI(x,z) \fand S^\mI(z,y) \mid z \in \Delta^\mI \} \\
(R^*)^\mI(x,y) & \!\!=\!\! & \sup \{\textstyle\bigotimes\{R^\mI(x_i,x_{i+1}) \mid 0 \leq i < n\} \mid\\ 
			   & & \qquad n \geq 0,\ x_0,\ldots,x_n \in \Delta^\mI, x_0 = x,\ x_n = y\} \\
(C?)^\mI(x,y) & \!\!=\!\! & \textrm{(if $x \neq y$ then 0 else $C^\mI(x)$)}\\[1ex]
p^\mI(x) & \!\!=\!\! & p \\
\{a\}^\mI(x) & \!\!=\!\! & \textrm{(if $x \neq a^\mI$ then 0 else 1)}\\
(\triangle C)^\mI(x) & \!\!=\!\! & \triangle (C^\mI(x)) \\
(\lnot C)^\mI(x) & \!\!=\!\! & \fneg (C^\mI(x)) \\
(C \mand D)^\mI(x) & \!\!=\!\! & C^\mI(x) \fand D^\mI(x) \\
(C \mor D)^\mI(x) & \!\!=\!\! & C^\mI(x) \fOr D^\mI(x) \\
(C \to D)^\mI(x) & \!\!=\!\! & (C^\mI(x) \fto D^\mI(x)) \\
(\V r.C)^\mI(x) & \!\!=\!\! & \inf \{r^\mI(x,y) \fto C^\mI(y) \mid y \in \Delta^\mI\} \\
(\E r.C)^\mI(x) & \!\!=\!\! & \sup \{r^\mI(x,y) \fand C^\mI(y) \mid y \in \Delta^\mI\}.
\end{array}
\]
} 
\caption{The meaning of complex roles and concepts in a fuzzy interpretation~$\mI$.\label{fig: HDHGS}}
\end{figure}

If $x,y \in \Delta^\mI$ and $r^\mI(x,y) > 0$ (respectively, $r^\mI(x,y) = 0$), then we say that $\tuple{x,y}$ is a {\em nonzero instance} (respectively, {\em zero-instance}) of the role~$r$. 
We say that concepts $C$ and $D$ are {\em equivalent}, denoted by $C \equiv D$, if $C^\mI = D^\mI$ for every fuzzy interpretation~$\mI$. 

\begin{example}
Let $\leq$ be the usual order on $L = [0,1]$ and let $\CN = \{A\}$, $\RN = \{r\}$ and $\IN = \emptyset$. 
Consider the fuzzy interpretation $\mI$ illustrated and specified below: 
	\begin{center}
		\begin{tikzpicture}[->,>=stealth]
			\node (a) {$a: 1$};
			\node (ua) [node distance=1.5cm, below of=a] {};
			\node (b) [node distance=2cm, left of=ua] {$b: 0.6$};
			\node (c) [node distance=2cm, right of=ua] {$c: 0.9$};
			\draw (a) to node [left,yshift=2mm]{0.8} (b);
			\draw (a) to node [right,yshift=2mm]{0.5} (c);
			\draw (b) to node [above]{0.7} (c);
		\end{tikzpicture}
	\end{center}	
	$\Delta^\mI = \{a,b,c\}$, 
	$A^\mI = \{a\!:\!1,\ b\!:\!0.6,\ c\!:\!0.9\}$ and
	$r^\mI = \{\tuple{a,b}\!:\!0.8$, $\tuple{a,c}\!:\!0.5$, $\tuple{b,c}\!:\!0.7\}$. 
	Denote $r^+ = r^* \circ r$. We have: 
	
	\smallskip
	
	\begin{center}
		\begin{tabular}{|c|c|c|c|}
			\hline
			& \ \ \ \ G\"odel\ \ \ \ & \ \ Product\ \ \ & {\L}ukasiewicz \\
			\hline
			& & & \\[-2.2ex]
			$(\E r.A)^\mI(a)$ & $0.6$ & $0.48$ & $0.4$ \\
			\hline
			& & & \\[-2.2ex]
			$(\V r.A)^\mI(a)$ & $0.6$ & $0.75$ & $0.8$ \\
			\hline
			& & & \\[-2.2ex]
			$(\E r^+.A)^\mI(a)$ & $0.7$ & $0.504$ & $0.4$ \\
			\hline
			& & & \\[-2.2ex]
			$(\V r^+.A)^\mI(a)$ & $0.6$ & $0.75$ & $0.8$ \\
			\hline
		\end{tabular}
	\end{center}
	
\end{example}


A {\em fuzzy assertion} in \fALCP has the form $C(a) \bowtie p$, $R(a,b) \bowtie p$, $a \doteq b$ or $a \not\doteq b$, where $C$ is a concept of \fALCP, $R$ is a role of \fALCP, $\bowtie\ \in \{\geq, >, \leq, <\}$ and $p \in \IZOcc$. A~{\em fuzzy ABox} in \fALCP is a finite set of fuzzy assertions in \fALCP. 

A {\em fuzzy TBox axiom} in \fALCP has the form $(C \sqsubseteq D) \rhd p$, where $C$ and $D$ are concepts of \fALCP, $\rhd \in \{\geq, > \}$ and $p \in \IZOcc$. A {\em fuzzy TBox} in \fALCP is a finite set of fuzzy TBox axioms in~\fALCP. 

The satisfaction relation $\mI \models \varphi$ ($\mI$ {\em validates}~$\varphi$), where $\mI$ is a fuzzy interpretation and $\varphi$ is a TBox axiom or a fuzzy assertion, is defined as follows:
\[
\begin{array}{lcl}
\mI \models (C \sqsubseteq D) \rhd p & \!\!\textrm{iff}\!\! & (C \to D)^\mI(x) \rhd p \textrm{ for all } x \in \Delta^\mI, \\
\mI \models C(a) \bowtie p & \!\!\textrm{iff}\!\! & C^\mI(a^\mI) \bowtie p, \\ 
\mI \models R(a,b) \bowtie p & \!\!\textrm{iff}\!\! & R^\mI(a^\mI,b^\mI) \bowtie p, \\
\mI \models a \doteq b & \!\!\textrm{iff}\!\! & a^\mI = b^\mI, \\ 
\mI \models a \not\doteq b & \!\!\textrm{iff}\!\! & a^\mI \neq b^\mI.
\end{array}
\]

A fuzzy interpretation $\mI$ is a {\em model} of a fuzzy TBox $\mT$, denoted by $\mI \models \mT$, if $\mI \models \varphi$ for all $\varphi \in \mT$. Similarly, $\mI$ is a {\em model} of a fuzzy ABox $\mA$, denoted by $\mI \models \mA$, if $\mI \models \varphi$ for all $\varphi \in \mA$.

\comment{
A role $R$ of \fALCP is in {\em inverse normal form} if the inverse constructor is applied in $R$ only to role names (i.e., atomic roles). 
Given roles $R$ and $S$, we write $R \equiv S$ and say that $R$ and $S$ are {\em equivalent} if $R^\mI = S^\mI$ for every fuzzy interpretation~$\mI$. 

\begin{lemma}\label{lemma: JHFAS}
	Every role of \fALCP has an equivalent in inverse normal form.
\end{lemma}

Without loss of generality, from now on we assume that roles are in inverse normal form. 
} 


\subsection{Crisp Bisimulations for Fuzzy Description Logics}
\label{section: crisp bis}

Given fuzzy interpretations $\mI$ and $\mI'$, a relation $Z \subseteq \Delta^\mI \times \Delta^\mIp$ is called a ({\em crisp}) {\em $\Phi$-bisimulation} between $\mI$ and $\mI'$ if the following conditions hold for every $x,y \in \Delta^\mI$, $x',y' \in \Delta^\mIp$, $A \in \CN$, $r \in \RN$, $a \in \IN$, and every basic role $R$ w.r.t.\ $\Phi$, where $\to$, $\land$ and $\leftrightarrow$ are the usual crisp logical connectives:
\begin{eqnarray}
&&\!\!\!\!\!\!\!\!\!\!\!\!\!\!\!\!\!\!\!\! Z(x,x') \to A^\mI(x) = A^\mIp(x'), \label{eq: FB 2} \\[0.5ex]
&&\!\!\!\!\!\!\!\!\!\!\!\!\!\!\!\!\!\!\!\! [Z(x,x') \,\land\, (R^\mI(x,y) \!>\! 0)] \to \E y' \in \Delta^\mIp\, [(R^\mI(x,y) \!\leq\! R^\mIp(x',y')) \,\land\, Z(y,y')], \label{eq: FS 3} \\[0.5ex]
&&\!\!\!\!\!\!\!\!\!\!\!\!\!\!\!\!\!\!\!\! [Z(x,x') \,\land\, (R^\mIp(x',y') \!>\! 0)] \to \E y \in \Delta^\mI\,[(R^\mIp(x',y') \!\leq\! R^\mI(x,y)) \,\land\, Z(y,y')]; \label{eq: FS 3b}
\end{eqnarray}
if $O \in \Phi$, then 
\begin{eqnarray}
&&\!\!\!\!\!\!\!\!\!\!\!\!\!\!\!\!\!\!\!\! Z(x,x') \to (x = a^\mI \leftrightarrow x' = a^\mIp); \label{eq: FS 4bis}
\end{eqnarray}
if $U \in \Phi$, then 
\begin{eqnarray}
&&\!\!\!\!\!\!\!\!\!\!\!\!\!\!\!\!\!\!\!\! Z \neq \emptyset \to \V x \in \Delta^\mI\, \E x' \in \Delta^\mIp\ Z(x,x'), \label{eq: FS 6} \\
&&\!\!\!\!\!\!\!\!\!\!\!\!\!\!\!\!\!\!\!\! Z \neq \emptyset \to \V x' \in \Delta^\mIp\, \E x \in \Delta^\mI\ Z(x,x'). \label{eq: FS 6b}
\end{eqnarray}

Note that the definition of $\Phi$-bisimulations does not depend on whether the elements of $\{\circ$, $\mor_r$, $*$, $?\}$ belong to $\Phi$. It depends on whether $I \in \Phi$ via basic roles w.r.t.~$\Phi$. 

A {\em $\Phi$-auto-bisimulation} of a fuzzy interpretation~$\mI$ is a $\Phi$-bisimulation between $\mI$ and itself.

\begin{figure}
	\begin{center}
		\begin{tikzpicture}[->,>=stealth,auto]
			\node (S) {$\mI$};
			\node (u) [node distance=0.8cm, below of=S] {$u\!:\!1$};
			\node (bu) [node distance=2.0cm, below of=u] {};		
			\node (v) [node distance=1.5cm, left of=bu] {$v\!:\!0.5$};
			\node (w) [node distance=1.5cm, right of=bu] {$w\!:\!0.5$};
			\draw (u) to node[left]{0.7} (v);
			\draw (u) to node[right]{0.9} (w);
			\draw (v) edge [bend left=20] node[above]{0.8} (w);
			\draw (w) edge [bend left=20] node[below]{0.8} (v);
			\draw (v) edge[loop below,out=-120,in=-60,looseness=10] node{$0.6$} (v);
			\node (Sp) [node distance=5cm, right of=S] {$\mIp$};
			\node (up) [node distance=0.8cm, below of=Sp] {$u'\!:\!1$};
			\node (vp) [node distance=2.0cm, below of=up] {$v'\!:\!0.5$};
			\draw (up) to node[right]{0.9} (vp);
			\draw (vp) edge[loop below,out=-120,in=-60,looseness=10] node{$0.8$} (vp);
		\end{tikzpicture}
	\caption{An illustration for Example~\ref{example: HDJQK}.\label{fig: GDHSJ}}
	\end{center}
\end{figure}

\begin{example}\label{example: HDJQK}
Let $\leq$ be the usual order on $L = [0,1]$ and let $\CN = \{A\}$, $\RN = \{r\}$ and $\IN = \{a\}$. 
Consider the fuzzy interpretations $\mI$ and $\mIp$ specified below and illustrated in Figure~\ref{fig: GDHSJ}. 
\begin{itemize}
\item $\Delta^\mI = \{u,v,w\}$, $a^\mI = u$, $A^\mI = \{u\!:\!1, v\!:\!0.5, w\!:\!0.5\}$, \\
$r^\mI = \{\tuple{u,v}\!:\!0.7, \tuple{u,w}\!:\!0.9, \tuple{v,v}\!:\!0.6, \tuple{v,w}\!:\!0.8, \tuple{w,v}\!:\!0.8\}$;
\item $\Delta^\mIp = \{u',v'\}$, $a^\mIp = u'$, $A^\mIp = \{u\!:\!1, v\!:\!0.5\}$, 
$r^\mIp = \{\tuple{u',v'}\!:\!0.9, \tuple{v',v'}\!:\!0.8\}$. 
\end{itemize}
Let $\{\triangle\} \subseteq \Phi \subseteq \{\triangle, \circ, \mor_r, *, ?, U, O\}$. It is straightforward to check that $Z = \{\tuple{u,u'}$, $\tuple{v,v'}$, $\tuple{w,v'}\}$ is the largest $\Phi$-bisimulation between $\mI$ and $\mI'$.
\myend
\end{example}

\begin{proposition}[{\cite[Proposition~5]{TFS2022}}]\label{prop: HFHSJ 2}
Let $\mI$, $\mIp$ and $\mI''$ be fuzzy interpretations.
\begin{enumerate}
\item\label{item: HFHSJ2 1} $\{\tuple{x,x} \mid x \in \Delta^\mI\}$ is a $\Phi$-auto-bisimulation of~$\mI$.
		
\item\label{item: HFHSJ2 2}  If $Z$ is a $\Phi$-bisimulation between $\mI$ and $\mIp$, then $Z^{-1}$ is a $\Phi$-bisimulation between $\mIp$ and~$\mI$.
		
\item\label{item: HFHSJ2 3}  If $Z_1$ is a $\Phi$-bisimulation between $\mI$ and $\mIp$, and $Z_2$ is a $\Phi$-bisimulation between $\mIp$ and $\mI''$, then $Z_1 \circ Z_2$ is a $\Phi$-bisimulation between $\mI$ and $\mI''$.
		
\item\label{item: HFHSJ2 4}  If $\mZ$ is a set of $\Phi$-bisimulations between $\mI$ and $\mIp$, then $\bigcup\mZ$ is also a $\Phi$-bisimulation between $\mI$ and $\mIp$.
\end{enumerate}   
\end{proposition}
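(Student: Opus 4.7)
The plan is to verify, for each of the four claims, every one of the bisimulation conditions \eqref{eq: FB 2}--\eqref{eq: FS 6b}, restricted to those that are actually in force (the conditions \eqref{eq: FS 4bis}, \eqref{eq: FS 6}, \eqref{eq: FS 6b} only need to be checked when $O \in \Phi$ or $U \in \Phi$, respectively). Since the definition of $\Phi$-bisimulation quantifies only over atomic concept names, role names (possibly inverted), and individual names, the verifications reduce to set-theoretic manipulations of the witnessing relations.

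For claim~\ref{item: HFHSJ2 1}, I would take $Z = \{\tuple{x,x} \mid x \in \Delta^\mI\}$ and for each condition observe that picking $y' = y$ (respectively $x' = x$) trivially supplies the required witness with $R^\mI(x,y) \leq R^\mI(x,y)$ and $A^\mI(x) = A^\mI(x)$; the nominal condition is immediate because $x = a^\mI$ iff $x = a^\mI$, and the universal-role conditions follow from reflexivity. For claim~\ref{item: HFHSJ2 2}, I would simply reread the symmetric pair \eqref{eq: FS 3} and \eqref{eq: FS 3b} (and \eqref{eq: FS 6}, \eqref{eq: FS 6b}) and observe that swapping the roles of $\mI$ and $\mIp$ turns one into the other, while \eqref{eq: FB 2} and \eqref{eq: FS 4bis} are already symmetric as written.

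For claim~\ref{item: HFHSJ2 3}, given $(x,x'') \in Z_1 \circ Z_2$ I would pick an intermediate $x' \in \Delta^\mIp$ with $(x,x') \in Z_1$ and $(x',x'') \in Z_2$. Then \eqref{eq: FB 2} follows by transitivity of equality; for \eqref{eq: FS 3}, if $R^\mI(x,y) > 0$, apply \eqref{eq: FS 3} for $Z_1$ to obtain $y' \in \Delta^\mIp$ with $R^\mIp(x',y') \geq R^\mI(x,y) > 0$ and $(y,y') \in Z_1$, then apply \eqref{eq: FS 3} for $Z_2$ to obtain $y'' \in \Delta^{\mI''}$ with $R^{\mI''}(x'',y'') \geq R^\mIp(x',y') \geq R^\mI(x,y)$ and $(y',y'') \in Z_2$, so $(y,y'') \in Z_1 \circ Z_2$ witnesses the condition. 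Condition~\eqref{eq: FS 3b} is handled symmetrically; the nominal condition~\eqref{eq: FS 4bis} chains equivalences through $x' = a^\mIp$; and the universal-role conditions use the fact that $Z_1 \circ Z_2$ is nonempty exactly when both $Z_1$ and $Z_2$ are nonempty, plus a straightforward picking of witnesses through the intermediate interpretation.

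Claim~\ref{item: HFHSJ2 4} is the easiest: any $(x,x') \in \bigcup \mZ$ lies in some $Z \in \mZ$, so every existential witness supplied by $Z$ is also a witness for $\bigcup \mZ$, and the equalities required by \eqref{eq: FB 2} and \eqref{eq: FS 4bis} already hold. The universal-role clauses \eqref{eq: FS 6}--\eqref{eq: FS 6b} require only that $\bigcup \mZ$ is nonempty, which follows from nonemptiness of some member. No step looks like a real obstacle; the only mild subtlety is that in \eqref{eq: FS 3}--\eqref{eq: FS 3b} one must carry the chain of inequalities $R^\mI(x,y) \leq R^\mIp(x',y') \leq R^{\mI''}(x'',y'')$ through the composition in the correct direction, which is where a careless proof could go wrong.
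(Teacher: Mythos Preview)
Your proposal is correct and follows the standard route for this kind of result. Note, however, that the paper does not supply its own proof of this proposition: it is simply quoted from~\cite[Proposition~5]{TFS2022} without argument, so there is nothing in the present paper to compare against. Your direct verification of Conditions~\eqref{eq: FB 2}--\eqref{eq: FS 6b} for each of the four closure properties is exactly what one would expect such a proof to look like.

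One small imprecision worth tightening: in item~\ref{item: HFHSJ2 3} you write that ``$Z_1 \circ Z_2$ is nonempty exactly when both $Z_1$ and $Z_2$ are nonempty.'' As a bare set-theoretic statement this is false in general (two nonempty relations can compose to the empty relation), but in the present context it is true \emph{because} $U \in \Phi$ forces Conditions~\eqref{eq: FS 6}--\eqref{eq: FS 6b} on $Z_1$ and $Z_2$, which guarantee that the range of $Z_1$ and the domain of $Z_2$ both equal all of $\Delta^\mIp$. You clearly have the right idea (``picking of witnesses through the intermediate interpretation''), but the phrasing should make explicit that this step already uses the totality conditions for $Z_1$ and~$Z_2$.
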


\begin{corollary}[{\cite[Corollary~6]{TFS2022}}]\label{prop: HDFER}
For any fuzzy interpretation $\mI$, the largest $\Phi$-auto-bisimulation of $\mI$ exists and is an equivalence relation. 
\end{corollary}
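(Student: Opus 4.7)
The plan is to derive the corollary directly from the four items of Proposition~\ref{prop: HFHSJ 2}, which together say that $\Phi$-auto-bisimulations of a fixed fuzzy interpretation form a collection that is closed under taking the identity, inverses, relational composition, and arbitrary unions. No new verification against the defining conditions \eqref{eq: FB 2}--\eqref{eq: FS 6b} will be needed; everything follows by abstract manipulation of these closure properties.

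First I would establish existence of the largest $\Phi$-auto-bisimulation. Let $\mZ$ denote the family of all $\Phi$-auto-bisimulations of $\mI$. Item~\ref{item: HFHSJ2 1} of Proposition~\ref{prop: HFHSJ 2} gives that the diagonal $\{\tuple{x,x}\mid x\in\Delta^\mI\}$ belongs to $\mZ$, so $\mZ\neq\emptyset$. Applying item~\ref{item: HFHSJ2 4} with $\mI'=\mI$, the union $Z \defeq \bigcup \mZ$ is again a $\Phi$-bisimulation between $\mI$ and $\mI$, i.e.\ a $\Phi$-auto-bisimulation. By construction $Z$ contains every member of $\mZ$, so it is the (unique) largest $\Phi$-auto-bisimulation of~$\mI$.

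Next I would verify that this largest relation $Z$ is an equivalence relation on $\Delta^\mI$. Reflexivity is immediate: by item~\ref{item: HFHSJ2 1} the diagonal is in $\mZ$, hence contained in $Z$. For symmetry, item~\ref{item: HFHSJ2 2} applied with $\mIp=\mI$ yields that $Z^{-1}$ is also a $\Phi$-auto-bisimulation of~$\mI$, so by maximality of $Z$ we have $Z^{-1}\subseteq Z$; taking inverses of both sides gives $Z\subseteq Z^{-1}$, and therefore $Z=Z^{-1}$. For transitivity, item~\ref{item: HFHSJ2 3} applied with $\mIp=\mI''=\mI$ and $Z_1=Z_2=Z$ shows that $Z\circ Z$ is a $\Phi$-auto-bisimulation of~$\mI$; by maximality $Z\circ Z\subseteq Z$, which is exactly transitivity.

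There is no real obstacle here beyond bookkeeping: the corollary is a textbook consequence of the closure properties collected in Proposition~\ref{prop: HFHSJ 2}, and the only thing to be careful about is to instantiate items~\ref{item: HFHSJ2 2}--\ref{item: HFHSJ2 4} uniformly with $\mIp=\mI$ (and $\mI''=\mI$) so that ``bisimulation between $\mI$ and $\mIp$'' specializes to ``auto-bisimulation of $\mI$'' at each step.
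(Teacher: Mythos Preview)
Your proposal is correct and is exactly the standard derivation the paper intends: the corollary is stated immediately after Proposition~\ref{prop: HFHSJ 2} with no separate proof, precisely because it follows from items~\ref{item: HFHSJ2 1}--\ref{item: HFHSJ2 4} in the way you describe. Nothing further is needed.
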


Let $\mI$ and $\mIp$ be fuzzy interpretations and let $x \in \Delta^\mI$ and $x' \in \Delta^\mIp$. We write $x \simP x'$ to denote that there exists a $\Phi$-bisimulation $Z$ between $\mI$ and $\mI'$ such that $Z(x,x')$ holds. 
We also write $x \equivP x'$ to denote that $C^\mI(x) = C^\mIp(x')$ for all concepts $C$ of~\fALCP. 
If $x \simP x'$, then we say that $x$ and $x'$ are {\em $\Phi$-bisimilar}. 
If $\IN \neq \emptyset$ and there exists a $\Phi$-bisimulation $Z$ between $\mI$ and $\mI'$ such that $Z(a^\mI,a^\mIp)$ holds for all $a \in \IN$, then we say that $\mI$ and $\mI'$ are {\em $\Phi$-bisimilar} and write $\mI \simP \mIp$. 

The work~\cite{TFS2022} provides results on invariance of concepts under bisimulations and the Hennessy-Milner property of bisimulations. They are formulated for ``witnessed'' and ``modally saturated'' fuzzy interpretations. For a special case, where the considered fuzzy interpretations are finite, they imply the following theorem.

\begin{theorem}[{cf.~\cite[Corollary~17]{TFS2022}}]\label{theorem: HGDHS}
Let $\mI$ and $\mI'$ be finite fuzzy interpretations and let $x \in \Delta^\mI$ and $x' \in \Delta^\mIp$. 
Then $x \equivP x'$ iff $x \simP x'$.
\end{theorem}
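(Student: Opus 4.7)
The plan is to verify the two directions separately: the forward direction $x \simP x' \Rightarrow x \equivP x'$ by structural induction, and the converse by a Hennessy--Milner style argument adapted to the fuzzy setting with the Baaz projection.

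For the forward direction, I assume a $\Phi$-bisimulation $Z$ with $Z(x,x')$ and prove $C^\mI(x) = C^\mIp(x')$ by induction on $C$, interleaved with an induction on the structure of roles. The role induction first extends \eqref{eq: FS 3} and \eqref{eq: FS 3b} from basic roles to all roles of \fALCP: the case $R \circ S$ uses associativity of $\fand$ and a two-step witness; $R \mor S$ splits on which disjunct realizes the positive degree; $R^*$ iterates the forth condition along a fixed finite path and uses monotonicity of $\bigotimes$; $C?$ invokes the already established concept invariance for $C$; and when $I \in \Phi$ the inverse is handled because inverses of role names are themselves basic. The concept induction is then routine: concept names use \eqref{eq: FB 2}; truth constants are immediate; $\triangle, \lnot, \mand, \mor, \to$ propagate by pointwise application of the corresponding algebra operator to the IH; $\E R.C$ and $\V R.C$ use the extended forth/back conditions; nominals use \eqref{eq: FS 4bis}.

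For the converse, the plan is to show that the relation $\equivP$ is itself a $\Phi$-bisimulation between $\mI$ and $\mIp$. Condition \eqref{eq: FB 2} follows by instantiating $\equivP$ with $A \in \CN$, and \eqref{eq: FS 4bis}, when $O \in \Phi$, by instantiating with $\{a\}$. The key case is the forth condition \eqref{eq: FS 3} on a basic role $R$ with $R^\mI(x,y) > 0$. Set $p = R^\mI(x,y)$ and let $Y = \{y' \in \Delta^\mIp \mid y \equivP y'\}$. For each $y' \in \Delta^\mIp \setminus Y$, pick a distinguishing concept $C_{y'}$ with $C_{y'}^\mI(y) \neq C_{y'}^\mIp(y')$ and, writing $p_{y'} = C_{y'}^\mI(y)$, form $D_{y'} = \triangle(C_{y'} \to p_{y'}) \mand \triangle(p_{y'} \to C_{y'})$. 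By \eqref{eq: fto-when-1} applied in both directions, $D_{y'}^\mI(y) = 1$ while $D_{y'}^\mIp(y') = 0$. Using finiteness of $\Delta^\mIp$, set $D = \mybigsqcap\{D_{y'} \mid y' \in \Delta^\mIp \setminus Y\}$; by \eqref{eq: req7} and \eqref{eq: UYRJH}, $D^\mI(y) = 1$ and $D^\mIp(y'') = 0$ for every $y'' \notin Y$. Suppose for contradiction no $y' \in Y$ satisfies $R^\mIp(x',y') \geq p$. Then the concept $\triangle(p \to \E R.\triangle D)$ evaluates to $1$ at $x$ (witnessed by $y$, so $(\E R.\triangle D)^\mI(x) \geq p$) and to $0$ at $x'$: on $Y$ the factor $R^\mIp(x',y') < p$ bounds the product by \eqref{eq: req1}, and on $\Delta^\mIp \setminus Y$ the factor $(\triangle D)^\mIp(y'') = 0$, so by finiteness the maximum defining $(\E R.\triangle D)^\mIp(x')$ is strictly below~$p$, whence $p \fto (\E R.\triangle D)^\mIp(x') \neq 1$ and the outer $\triangle$ yields $0$. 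This contradicts $x \equivP x'$. The back condition \eqref{eq: FS 3b} is symmetric; when $U \in \Phi$, conditions \eqref{eq: FS 6}--\eqref{eq: FS 6b} are handled analogously by using $\E U.\triangle D$ where $D$ separates a hypothetically unmatched $x$ from every element of $\Delta^\mIp$.

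The main obstacle is the converse direction, specifically turning a fuzzy-valued mismatch on a finite set into a single crisp distinguishing concept. The tools that make this work without adjointness of $(\fand,\fto)$ or any assumption on $\fOr$ or $\fneg$ are the Baaz projection $\triangle$, the truth constants $p \in \IZOcc$, and the residual property \eqref{eq: fto-when-1}; finiteness of $\Delta^\mI$ and $\Delta^\mIp$ is essential both to turn suprema in $\E R.C$ into maxima so that strict inequalities are preserved, and to permit the finite conjunction producing $D$.
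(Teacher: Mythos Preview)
The paper does not prove this theorem itself: it is placed in the preliminaries (Section~\ref{section: crisp bis}) and attributed to \cite[Corollary~17]{TFS2022}, with the remark that the general results there for witnessed and modally saturated fuzzy interpretations specialize to finite ones. So there is no in-paper proof to compare against beyond the citation.

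Your direct argument is correct and is precisely the standard template underlying the cited result: invariance by mutual structural induction on concepts and roles for the $\simP \Rightarrow \equivP$ direction, and a Hennessy--Milner argument showing that $\equivP$ is itself a $\Phi$-bisimulation for the converse, where the Baaz projection together with truth constants and \eqref{eq: fto-when-1} collapse fuzzy mismatches into crisp $0/1$ values over the finite domains. Two small points worth tightening in a full write-up: for $R^*$ in the forward direction, a single path does not immediately give a single witness $y'$, since different paths from $x$ to $y$ may be matched to different endpoints; you need finiteness of $\Delta^\mIp$ to pick one $y'$ with $Z(y,y')$ maximizing $(R^*)^\mIp(x',y')$ over the (finite) $Z$-image of $y$, and then bound the supremum pathwise. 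For the inverse constructor applied to compound roles you should either invoke inverse normal form or push $\cnv{\,\cdot\,}$ through $\circ$, $\mor$, $*$, $?$ explicitly; your remark only covers $\cnv{r}$ for role names.
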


\begin{corollary}[{cf.~\cite[Corollary~18]{TFS2022}}]\label{cor: fG H-M 4 c}
Suppose $\IN \neq \emptyset$ and let $\mI$ and $\mI'$ be finite fuzzy interpretations. Then, $\mI$ and $\mIp$ are $\Phi$-bisimilar iff $a^\mI \equivP a^\mIp$ for all $a \in \IN$.
\end{corollary}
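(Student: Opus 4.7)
The plan is to derive Corollary~\ref{cor: fG H-M 4 c} directly from Theorem~\ref{theorem: HGDHS} together with the closure properties of $\Phi$-bisimulations listed in Proposition~\ref{prop: HFHSJ 2}. Since both directions reduce to statements about individual pairs $(a^\mI, a^\mIp)$, the core work has already been done in the Hennessy--Milner--style theorem; the corollary is just a packaging of this at the level of interpretations.

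For the forward direction, I would assume $\mI \simP \mIp$, so by definition there exists a single $\Phi$-bisimulation $Z$ between $\mI$ and $\mIp$ such that $Z(a^\mI, a^\mIp)$ holds for every $a \in \IN$. Fix such an $a$; then $Z$ witnesses $a^\mI \simP a^\mIp$, and Theorem~\ref{theorem: HGDHS} (applicable because $\mI$ and $\mIp$ are finite) yields $a^\mI \equivP a^\mIp$.

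For the backward direction, suppose $a^\mI \equivP a^\mIp$ for every $a \in \IN$. Applying Theorem~\ref{theorem: HGDHS} in the other direction (again using finiteness) gives, for each $a \in \IN$, a $\Phi$-bisimulation $Z_a$ between $\mI$ and $\mIp$ with $Z_a(a^\mI, a^\mIp)$. Define $Z = \bigcup_{a \in \IN} Z_a$; by Proposition~\ref{prop: HFHSJ 2}(\ref{item: HFHSJ2 4}), $Z$ is itself a $\Phi$-bisimulation between $\mI$ and $\mIp$, and by construction $Z(a^\mI, a^\mIp)$ holds for every $a \in \IN$. Hence $\mI \simP \mIp$.

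There is essentially no real obstacle, since all the hard content sits in Theorem~\ref{theorem: HGDHS} and Proposition~\ref{prop: HFHSJ 2}. The only small point worth checking is that the closure-under-union part of Proposition~\ref{prop: HFHSJ 2} is stated unconditionally, including when $U \in \Phi$: the conditions \eqref{eq: FS 6}--\eqref{eq: FS 6b} are implications from $Z \neq \emptyset$, and a union of bisimulations each satisfying them still satisfies them, so no extra care is needed there. The assumption $\IN \neq \emptyset$ is used only to ensure that the equivalences $a^\mI \equivP a^\mIp$ range over a non-empty set, so that the hypothesis of the backward direction is a meaningful statement.
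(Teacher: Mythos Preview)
Your proof is correct and follows exactly the route the paper indicates: the paper's justification is the single sentence ``This corollary follows from Theorem~\ref{theorem: HGDHS} and the fourth assertion of Proposition~\ref{prop: HFHSJ 2},'' and your argument simply spells out both directions of that derivation. The only minor refinement is that $\IN \neq \emptyset$ is needed primarily because the notion of $\Phi$-bisimilarity between interpretations is defined only under that assumption, not merely to make the family of equivalences non-vacuous.
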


This corollary follows from Theorem~\ref{theorem: HGDHS} and the fourth assertion of Proposition~\ref{prop: HFHSJ 2}.

A fuzzy interpretation $\mI$ is said to be {\em $\Phi$-connected} if, for every $x \in \Delta^\mI$, there exist $a \in \IN$, $x_0,\ldots,x_n \in \Delta^\mI$ and basic roles $R_1,\ldots,R_n$ w.r.t.\ $\Phi$ such that $x_0 = a^\mI$, $x_n = x$ and $R_i^\mI(x_{i-1},x_i) > 0$ for all $1 \leq i \leq n$. 

A fuzzy TBox or ABox is {\em invariant under $\Phi$-bisimilarity between finite fuzzy interpretations} if, for every finite fuzzy interpretations $\mI$ and $\mI'$ that are $\Phi$-bisimilar, $\mI$ is a model of the box iff so is~$\mI'$. 

The three following propositions are special cases of Propositions~11--13 of~\cite{TFS2022}. We present full proofs for them, as the latter are provided in~\cite{TFS2022} without proofs. 

\begin{proposition}\label{prop: HGFKW}
If $U \in \Phi$, then all fuzzy TBoxes in \fALCP are invariant under $\Phi$-bisimilarity between finite fuzzy interpretations.
\end{proposition}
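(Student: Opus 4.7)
The plan is to unpack the definitions and then let Theorem~\ref{theorem: HGDHS} do the heavy lifting. Fix a fuzzy TBox $\mT$ in \fALCP and finite fuzzy interpretations $\mI, \mI'$ with $\mI \simP \mI'$. By the definition of $\simP$, there is a $\Phi$-bisimulation $Z$ between $\mI$ and $\mI'$ with $Z(a^\mI,a^{\mI'})$ for every $a \in \IN$; in particular $Z \neq \emptyset$. Because $U \in \Phi$, the conditions \eqref{eq: FS 6} and~\eqref{eq: FS 6b} then force $Z$ to be total on both sides: for every $x \in \Delta^\mI$ some $x' \in \Delta^{\mI'}$ satisfies $Z(x,x')$, and, symmetrically, every $x' \in \Delta^{\mI'}$ is $Z$-related to some $x \in \Delta^\mI$.

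Next, whenever $Z(x,x')$ holds I observe that $x \simP x'$ directly by the definition of $\simP$, so Theorem~\ref{theorem: HGDHS} (applicable since both $\mI$ and $\mI'$ are finite) yields $x \equivP x'$, which is precisely $C^\mI(x) = C^{\mI'}(x')$ for every concept $C$ of~\fALCP.

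It remains to transfer each axiom. Take $(C \sqsubseteq D) \rhd p \in \mT$ and assume $\mI \models (C \sqsubseteq D) \rhd p$, i.e.\ $(C^\mI(x) \fto D^\mI(x)) \rhd p$ for all $x \in \Delta^\mI$. For an arbitrary $x' \in \Delta^{\mI'}$, totality provides $x \in \Delta^\mI$ with $Z(x,x')$; the previous paragraph then gives $C^\mI(x) = C^{\mI'}(x')$ and $D^\mI(x) = D^{\mI'}(x')$, so $(C^{\mI'}(x') \fto D^{\mI'}(x')) \rhd p$, proving $\mI' \models (C \sqsubseteq D) \rhd p$. The opposite direction is fully symmetric, using that $Z^{-1}$ is a $\Phi$-bisimulation between $\mI'$ and $\mI$ by Proposition~\ref{prop: HFHSJ 2}(\ref{item: HFHSJ2 2}) together with~\eqref{eq: FS 6}.

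There is no substantive obstacle once the bisimulation invariance of concepts from Theorem~\ref{theorem: HGDHS} is in hand; the role of $U \in \Phi$ is only to supply the totality needed to push a universally quantified statement across the two domains. Without $U$ there could be elements of $\Delta^{\mI'}$ not $Z$-related to any element of $\Delta^\mI$, at which a TBox axiom could fail while holding throughout $\mI$, so the hypothesis $U \in \Phi$ is genuinely used and not merely convenient.
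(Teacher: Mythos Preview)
Your proof is correct and follows essentially the same route as the paper's: obtain a nonempty $\Phi$-bisimulation $Z$ from $\Phi$-bisimilarity, use $U \in \Phi$ via~\eqref{eq: FS 6b} to find a $Z$-partner for an arbitrary $x' \in \Delta^{\mI'}$, and invoke Theorem~\ref{theorem: HGDHS} to transfer the value of the concept across. The only cosmetic difference is that the paper applies Theorem~\ref{theorem: HGDHS} directly to the single concept $C \to D$, whereas you apply it to $C$ and $D$ separately; both are fine.
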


\begin{proof}
Suppose $U \in \Phi$ and let $\mI$ and $\mI'$ be finite fuzzy interpretations that are $\Phi$-bisimilar. It is assumed here that $\IN \neq \emptyset$. Let $\mT$ be a fuzzy TBox in $\fALCP$ and suppose $\mI \models \mT$. Let $(C \sqsubseteq D) \rhd p$ be a fuzzy TBox axiom from $\mT$ and let $x' \in \Delta^\mIp$. To prove that $\mIp \models \mT$, we need to show that $(C \to D)^\mIp(x') \rhd p$. Let $Z$ be a $\Phi$-bisimulation between $\mI$ and $\mIp$ such that $Z(a^\mI,a^\mIp)$ holds for all $a \in \IN$. Thus, $Z \neq \emptyset$. By Condition~\eqref{eq: FS 6b}, there exists $x \in \Delta^\mI$ such that $Z(x,x')$ holds. By Theorem~\ref{theorem: HGDHS}, it follows that $(C \to D)^\mI(x) = (C \to D)^\mIp(x')$. Since $\mI \models \mT$, we have that $(C \to D)^\mI(x) \rhd p$. Hence, $(C \to D)^\mIp(x') \rhd p$. 
\myend
\end{proof}

\begin{proposition}\label{prop: UFSSK 2}
Let $\mT$ be a fuzzy TBox in \fALCP and let $\mI$ and $\mIp$ be finite fuzzy interpretations that are $\Phi$-bisimilar. If both $\mI$ and $\mIp$ are $\Phi$-connected, then $\mI \models \mT$ iff $\mI' \models \mT$.
\end{proposition}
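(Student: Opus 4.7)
The plan is to exploit the $\Phi$-connectedness of $\mIp$ to reach every element $x' \in \Delta^\mIp$ from some named individual by a finite path of basic roles, and then to lift this path into $\mI$ along the bisimulation, thereby producing a counterpart $x \in \Delta^\mI$ with $Z(x,x')$. Once such an $x$ is found, Theorem~\ref{theorem: HGDHS} immediately gives $C^\mI(x) = C^\mIp(x')$ and $D^\mI(x) = D^\mIp(x')$, hence $(C \to D)^\mI(x) = (C \to D)^\mIp(x')$, and the TBox axiom is transferred.

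More concretely, by symmetry it suffices to prove the direction $\mI \models \mT \Rightarrow \mIp \models \mT$. Pick an axiom $(C \sqsubseteq D) \rhd p$ in $\mT$ and an arbitrary $x' \in \Delta^\mIp$. Since $\mI \simP \mIp$, fix a $\Phi$-bisimulation $Z$ between $\mI$ and $\mIp$ with $Z(a^\mI, a^\mIp)$ for every $a \in \IN$ (in particular $\IN \neq \emptyset$, since $\mIp$ is $\Phi$-connected and $\Delta^\mIp$ is nonempty). By $\Phi$-connectedness of $\mIp$, choose $a \in \IN$, elements $x'_0,\ldots,x'_n \in \Delta^\mIp$ with $x'_0 = a^\mIp$ and $x'_n = x'$, and basic roles $R_1,\ldots,R_n$ w.r.t.\ $\Phi$ such that $R_i^\mIp(x'_{i-1},x'_i) > 0$ for all $1 \leq i \leq n$.

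The core step is an induction on $i$ that constructs $x_0,\ldots,x_n \in \Delta^\mI$ with $Z(x_i,x'_i)$. For the base case take $x_0 = a^\mI$, which satisfies $Z(x_0,x'_0)$ by choice of $Z$. For the inductive step, given $Z(x_{i-1},x'_{i-1})$ and $R_i^\mIp(x'_{i-1},x'_i) > 0$, apply the ``backward'' bisimulation clause~\eqref{eq: FS 3b} to the basic role $R_i$ to obtain $x_i \in \Delta^\mI$ with $Z(x_i,x'_i)$ (the accompanying inequality on $R_i^\mI$ is not needed here, only the bisimilarity of the endpoints). After $n$ steps we have $x_n \in \Delta^\mI$ with $Z(x_n,x')$, hence $x_n \simP x'$.

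By Theorem~\ref{theorem: HGDHS}, $x_n \equivP x'$, so $(C \to D)^\mI(x_n) = (C \to D)^\mIp(x')$. Since $\mI \models (C \sqsubseteq D) \rhd p$, the left-hand side is $\rhd p$, and therefore $(C \to D)^\mIp(x') \rhd p$. As $x'$ and the axiom were arbitrary, $\mIp \models \mT$. The only mildly delicate point is the direction of the lifting: $\Phi$-connectedness is used on $\mIp$, so the relevant bisimulation clause is the one that takes an $R$-edge in $\mIp$ and produces a matching element in $\mI$, namely~\eqref{eq: FS 3b}; applying clause~\eqref{eq: FS 3} instead would give the wrong direction. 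This is also where the restriction to \emph{basic} roles matters, which fits perfectly with the hypothesis that $\Phi$-connectedness itself uses basic roles.
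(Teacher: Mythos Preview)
Your proof is correct and follows essentially the same route as the paper's: use $\Phi$-connectedness of $\mIp$ to get a basic-role path from a named individual to $x'$, lift it back to $\mI$ via clause~\eqref{eq: FS 3b}, and finish with Theorem~\ref{theorem: HGDHS}. Your write-up is actually a bit more careful than the paper's in noting explicitly that $\IN \neq \emptyset$ follows from $\Phi$-connectedness and in flagging why~\eqref{eq: FS 3b} rather than~\eqref{eq: FS 3} is the relevant clause.
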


\begin{proof}
Suppose both $\mI$ and $\mIp$ are $\Phi$-connected. We prove that, if $\mI \models \mT$, then $\mI' \models \mT$. The converse is similar and omitted. Suppose $\mI \models \mT$. Let $(C \sqsubseteq D) \rhd p$ be an arbitrary fuzzy TBox axiom of $\mT$ and let $x' \in \Delta^\mIp$. We need to show that $(C \to D)^\mIp(x') \rhd p$. 
Since $\mIp$ is $\Phi$-connected, there exists $a \in \IN$, $x'_0,\ldots,x'_n \in \Delta^\mIp$ and basic roles $R_1,\ldots,R_n$ w.r.t.\ $\Phi$ such that $x'_0 = a^\mIp$, $x'_n = x'$ and $R_i^\mIp(x'_{i-1},x'_i) > 0$ for all $1 \leq i \leq n$. Let $x_0 = a^\mI$ and let $Z$ be a $\Phi$-bisimulation between $\mI$ and $\mI'$ such that $Z(b^\mI,b^\mIp)$ holds for all $b \in \IN$. 
For each $i$ from $1$ to $n$, since $Z(x_{i-1},x'_{i-1})$ holds and $R^\mIp(x'_{i-1},x'_i) > 0$, by Condition~\eqref{eq: FS 3b}, there exists $x_i \in \Delta^\mI$ such that $Z(x_i,x'_i)$ holds. Let $x = x_n$. Thus, $Z(x,x')$ holds. Since $\mI \models \mT$, we have $(C \to D)^\mI(x) \rhd p$. Since $Z(x,x')$ holds, by Theorem~\ref{theorem: HGDHS}, $(C \to D)^\mIp(x') = (C \to D)^\mI(x) \rhd p$, which completes the proof.
\myend
\end{proof}

\begin{proposition}\label{prop: IFDMS}
Let $\mA$ be a fuzzy ABox in \fALCP. If $O \in \Phi$ or $\mA$ consists of only fuzzy assertions of the form $C(a) \bowtie p$, then $\mA$ is invariant under $\Phi$-bisimilarity between finite fuzzy interpretations.
\end{proposition}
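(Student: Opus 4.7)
The plan is: let $\mI$ and $\mIp$ be finite fuzzy interpretations that are $\Phi$-bisimilar, with witnessing $\Phi$-bisimulation $Z$ such that $Z(a^\mI, a^\mIp)$ holds for all $a \in \IN$. By Proposition~\ref{prop: HFHSJ 2}, it suffices to prove that $\mI \models \mA$ implies $\mIp \models \mA$; and Theorem~\ref{theorem: HGDHS} immediately yields $a^\mI \equivP a^\mIp$ for every $a \in \IN$. The proof then reduces to a case analysis on the form of each assertion in $\mA$.

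For an assertion of the form $C(a) \bowtie p$, the argument is direct: $a^\mI \equivP a^\mIp$ gives $C^\mI(a^\mI) = C^\mIp(a^\mIp)$, so the inequality w.r.t.~$\bowtie$ transfers between $\mI$ and $\mIp$. This already settles the second alternative of the hypothesis, where $\mA$ contains only assertions of this shape.

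For the case $O \in \Phi$, I would reduce the remaining forms of assertions to instances of concept invariance. The crucial observation is that $O \in \Phi$ makes $\{b\}$ a concept of \fALCP for every $b \in \IN$, hence so is $\E R.\{b\}$ for any role $R$ of~\fALCP. Using the semantics of $\{b\}$ together with property~\eqref{eq: UYRJH}, a direct computation from the definition of $(\E R.\{b\})^\mI$ gives $(\E R.\{b\})^\mI(a^\mI) = R^\mI(a^\mI, b^\mI)$, and analogously on~$\mIp$. Applying $a^\mI \equivP a^\mIp$ to the concept $\E R.\{b\}$ then yields $R^\mI(a^\mI, b^\mI) = R^\mIp(a^\mIp, b^\mIp)$, which settles assertions of the form $R(a,b) \bowtie p$. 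For $a \doteq b$ and $a \not\doteq b$, apply $a^\mI \equivP a^\mIp$ to the concept $\{b\}$: since $\{b\}^\mI(a^\mI) = 1$ iff $a^\mI = b^\mI$, we obtain $a^\mI = b^\mI$ iff $a^\mIp = b^\mIp$.

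The main obstacle is that the bisimulation conditions~\eqref{eq: FS 3} and~\eqref{eq: FS 3b} only constrain basic roles w.r.t.~$\Phi$, whereas a fuzzy assertion $R(a,b) \bowtie p$ may involve an arbitrary complex role of~\fALCP. Reducing $R^\mI(a^\mI, b^\mI)$ to the value of the concept $\E R.\{b\}$ at $a^\mI$ is the device that bypasses this mismatch by routing through Theorem~\ref{theorem: HGDHS}, and this is precisely why the hypothesis $O \in \Phi$ is indispensable in the first alternative.
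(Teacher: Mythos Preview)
Your argument is correct and in fact takes a cleaner route than the paper's own proof. The paper handles the case $C(a) \bowtie p$ exactly as you do, and also uses Condition~\eqref{eq: FS 4bis} (rather than the nominal concept $\{b\}$) for $a \doteq b$ and $a \not\doteq b$, which is equivalent in spirit. The real divergence is in the role-assertion case $R(a,b) \bowtie p$: the paper first treats $\rhd \in \{\geq,>\}$ by invoking an external lemma (\cite[Lemma~9]{TFS2022}) asserting that from $Z(a^\mI,a^\mIp)$ and $R^\mI(a^\mI,b^\mI) \rhd p$ one obtains some $y' \in \Delta^\mIp$ with $Z(b^\mI,y')$ and $R^\mIp(a^\mIp,y') \rhd p$, and then uses the nominal $\{b\}$ together with Theorem~\ref{theorem: HGDHS} only to identify $y' = b^\mIp$; the cases $\lhd \in \{\leq,<\}$ are then handled separately by contradiction. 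Your device of evaluating the concept $\E R.\{b\}$ at $a^\mI$ yields the exact equality $R^\mI(a^\mI,b^\mI) = R^\mIp(a^\mIp,b^\mIp)$ in one stroke, so all four comparators $\bowtie$ are covered uniformly and no appeal to~\cite[Lemma~9]{TFS2022} is needed. Your approach is thus more self-contained and more uniform; the paper's approach, by contrast, stays closer to the raw bisimulation conditions and makes the back-and-forth mechanism visible.
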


\begin{proof}
Suppose that $O \in \Phi$ or $\mA$ consists of only fuzzy assertions of the form $C(a) \bowtie p$. Let $\mI$ and $\mI'$ be finite fuzzy interpretations. Suppose that $\mI \simP \mI'$ and $\mI \models \mA$. Let $\varphi \in \mA$. We need to show that $\mI' \models \varphi$. Let $Z$ be a $\Phi$-bisimulation between $\mI$ and $\mIp$ such that $Z(a^\mI,a^\mIp)$ holds for all $a \in \IN$. 
\begin{itemize}
\item Case $\varphi = (a \doteq b)\,$: Since $\mI \models \mA$, we have $a^\mI = b^\mI$. Since $\mI \simP \mI'$, we have $a^\mI \simP a^{\mI'}$. Since $a^\mI = b^\mI$, by Condition~\eqref{eq: FS 4bis}, it follows that $a^{\mI'} = b^{\mI'}$. Therefore, $\mI' \models \varphi$. 
		
\item Case $\varphi = (a \not\doteq b)$ is similar to the above one.
		
\item Case $\varphi = (C(a) \bowtie  p)\,$: Since $\mI \models \mA$, $C^\mI(a^\mI) \bowtie p$. Since $\mI \simP \mI'$, we have $a^\mI \simP a^{\mI'}$. By Theorem~\ref{theorem: HGDHS}, it follows that $C^{\mI'}(a^{\mI'}) = C^\mI(a^\mI) \bowtie p$. Hence, $\mI' \models \varphi$. 
		
\item Case $\varphi = (R(a,b) \rhd p)$, with $\rhd \in \{\geq,>\}$: Without loss of generality, we assume that either $p > 0$ or $\rhd$ is $>$. Since $\mI \models \mA$, $R^\mI(a^\mI,b^\mI) \rhd p$. By~\cite[Lemma~9]{TFS2022}, there exists $y' \in \Delta^\mIp$ such that $Z(b^\mI,y')$ holds and $R^\mIp(a^\mIp,y') \rhd p$. Consider $C = \{b\}$. Since $Z(b^\mI,y')$ holds, by Theorem~\ref{theorem: HGDHS}, $C^{\mI'}(y') = C^\mI(b^\mI) = 1$. Thus, $y' = b^\mIp$ and $\mI' \models \varphi$. 
		
\item Case $\varphi = (R(a,b) \lhd p)$, with $\lhd \in \{\leq,<\}$: For a contradiction, suppose $\mI' \not\models \varphi$. Thus, $R^\mIp(a^\mIp,b^\mIp) \rhd p$, where $\rhd$ is \mbox{$\not\!\!\lhd$}. Similarly to the above case, we can derive that $\mI \models (R(a,b) \rhd p)$, which contradicts $\mI \models \varphi$.
\myend
\end{itemize}
\end{proof}


\section{Minimizing Finite Fuzzy Interpretations: Logical Properties}
\label{section: minimization}

In~\cite[Section~6]{FSS2020} together with coauthors we provided results on how to minimize a fuzzy interpretation w.r.t.\ certain logical criteria expressed in FDLs extended with involutive negation under the G\"odel semantics. The minimization is done by taking the quotient fuzzy interpretation w.r.t.\ the equivalence relation that is the largest crisp $\Phi$-auto-bisimulation. Apart from that definition, the main results of~\cite[Section~6]{FSS2020} provide logical properties of such minimized fuzzy interpretations. In this section, we generalize those results for the \FDLs \fALCP with $\triangle \in \Phi$ over an arbitrary algebra of fuzzy truth values. 
Definition~\ref{def: HDJOS}, Lemma~\ref{lemma: HDAMA}, Corollary~\ref{cor: JFWKA}, as well as the assertion~\ref{item: GHDJW} of Theorem~\ref{theorem: HSJAO} and the assertion~\ref{item: HFJHW 1} of Corollary~\ref{cor: HFJHW} given in this section are direct reformulations of the results of~\cite[Section~6]{FSS2020} and are not contributions of the current work. They depend only on the definition of crisp bisimulations, but not on the considered FDLs. 
On the other hand, Corollary~\ref{cor: JFWKA2}, the assertions~\ref{item: GHDJW0} and~\ref{item: GHDJW2} of Theorem~\ref{theorem: HSJAO} and the assertion~\ref{item: HFJHW 2} of Corollary~\ref{cor: HFJHW} depend on the considered FDLs, which differ from 
the FDLs studied in~\cite{FSS2020} in two essential aspects: the Baaz projection operator is used instead of involutive negation and an arbitrary algebra of fuzzy truth values is used instead of the G\"odel structure. They are our contributions for this section.

Given a finite fuzzy interpretation $\mI$, by $\simPI$ we denote the binary relation on $\Delta^\mI$ such that, for $x,x' \in \Delta^\mI$, $x \simPI x'$ iff $x \simP x'$. We call it the {\em $\Phi$-bisimilarity relation of~$\mI$}. 
By Proposition~\ref{prop: HFHSJ 2}, it is the largest $\Phi$-auto-bisimulation of~$\mI$. 
By Theorem~\ref{theorem: HGDHS}, it is an equivalence relation on $\Delta^\mI$. 

\begin{Definition}
\label{def: HDJOS}\em
Given a finite fuzzy interpretation $\mI$, the {\em quotient fuzzy interpretation} $\mIsimP$ of $\mI$ w.r.t.\ the equivalence relation $\simPI$ is specified as follows:\footnote{We write $\mIsimP$ instead of~$\mI/_{\simPI}$ for simplicity.}	 
\begin{itemize}
\item $\Delta^{\mIsimP} = \{[x]_{\simPI} \mid x \in \Delta^\mI\}$, where $[x]_{\simPI}$ is the equivalence class of $x$ w.r.t.\ $\simPI$, 
\item $a^{\mIsimP} = [a^\mI]_{\simPI}$ for $a \in \IN$, 
\item $A^{\mIsimP}([x]_{\simPI}) = A^\mI(x)$ for $A \in \CN$ and $x \in \Delta^\mI$, 
\item $r^{\mIsimP}([x]_{\simPI},[y]_{\simPI}) = \sup\{r^\mI(x,y') \mid y' \in [y]_{\simPI}\}$ for $r \in \RN$ and $x,y \in \Delta^\mI$.
\myend
\end{itemize}
\end{Definition}
Observe the following:
\begin{itemize}
\item For every $A \in \CN$, $x \in \Delta^\mI$ and $x' \in [x]_{\simPI}$, $A^\mI(x) = A^\mI(x')$. 
This follows from Condition~\eqref{eq: FB 2}.
\item For every $r \in \RN$, $x,y \in \Delta^\mI$ and $x' \in [x]_{\simPI}$, 
\[ 
	\sup\{r^\mI(x,y') \mid y' \in [y]_{\simPI}\} = 
	\sup\{r^\mI(x',y') \mid y' \in [y]_{\simPI}\}.
\]
This follows from Conditions~\eqref{eq: FS 3} and \eqref{eq: FS 3b}, when $\simPI$ is used as~$Z$.
\end{itemize}
Therefore, Definition~\ref{def: HDJOS} is well specified.

\begin{figure}
	\begin{center}
		\begin{tikzpicture}[->,>=stealth,auto]
			\node (S) {$\mI$};
			\node (u) [node distance=0.8cm, below of=S] {$u\!:\!1$};
			\node (bu) [node distance=2.0cm, below of=u] {};		
			\node (v) [node distance=1.5cm, left of=bu] {$v\!:\!0.5$};
			\node (w) [node distance=1.5cm, right of=bu] {$w\!:\!0.5$};
			\draw (u) to node[left]{0.7} (v);
			\draw (u) to node[right]{0.9} (w);
			\draw (v) edge [bend left=20] node[above]{0.8} (w);
			\draw (w) edge [bend left=20] node[below]{0.8} (v);
			\draw (v) edge[loop below,out=-120,in=-60,looseness=10] node{$0.6$} (v);
			\node (Sp) [node distance=5cm, right of=S] {$\mIsimP$};
			\node (up) [node distance=0.8cm, below of=Sp] {$\{u\}\!:\!1$};
			\node (vp) [node distance=2.0cm, below of=up] {$\{v,w\}\!:\!0.5$};
			\draw (up) to node[right]{0.9} (vp);
			\draw (vp) edge[loop below,out=-120,in=-60,looseness=10] node{$0.8$} (vp);
		\end{tikzpicture}
		\caption{An illustration for Example~\ref{example: HDJQK2}.\label{fig: GDHSJ2}}
	\end{center}
\end{figure}

\begin{example}\label{example: HDJQK2}
Let $\leq$ be the usual order on $L = [0,1]$ and let $\CN = \{A\}$, $\RN = \{r\}$ and $\IN = \{a\}$. 
Reconsider the fuzzy interpretation $\mI$ specified in Example~\ref{example: HDJQK}. It is redisplayed in Figure~\ref{fig: GDHSJ2}. 
Let $\{\triangle\} \subseteq \Phi \subseteq \{\triangle, \circ, \mor_r, *, ?, U, O\}$. It is straightforward to check that $\{\tuple{u,u}$, $\tuple{v,v}$, $\tuple{w,w}$, $\tuple{v,w}$, $\tuple{w,v}\}$ is the largest $\Phi$-auto-bisimulation of $\mI$. Consider $\mIsimP$. We have $\Delta^{\mIsimP} = \{\{u\}, \{v,w\}\}$, $a^{\mIsimP} = \{u\}$, $A^{\mIsimP} = \{\{u\}\!:\!1, \{v,w\}\!:\!0.5\}$ and $r^{\mIsimP} = \{\tuple{\{u\},\{v,w\}}\!:\!0.9, \tuple{\{v,w\},\{v,w\}}\!:\!0.8\}$. The fuzzy interpretation $\mIsimP$ is illustrated in Figure~\ref{fig: GDHSJ2}. It is isomorphic to the fuzzy interpretation $\mIp$ specified in Example~\ref{example: HDJQK} and illustrated in Figure~\ref{fig: GDHSJ}.
\myend
\end{example}

\begin{lemma}
\label{lemma: HDAMA}
Let $\mI$ be a finite fuzzy interpretation and \mbox{$Z = \{ \tuple{x,[x]_{\simPI}} \mid x \in \Delta^\mI\}$} be a subset of $\Delta^\mI \times \Delta^{\mIsimP}$. Then, $Z$ is a $\Phi$-bisimulation as well as a $(\Phi \cup \{U\})$-bisimulation between $\mI$ and $\mIsimP$.
\end{lemma}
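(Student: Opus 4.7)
The plan is to verify each of the defining clauses of $\Phi$-bisimulation (Conditions~\eqref{eq: FB 2}--\eqref{eq: FS 6b}) for the relation $Z$, making use of two facts: (i) by Corollary~\ref{prop: HDFER}, the bisimilarity relation $\simPI$ used to define $\mIsimP$ is itself a $\Phi$-auto-bisimulation of~$\mI$, and (ii) the well-definedness observations stated immediately after Definition~\ref{def: HDJOS}. Since $Z$ surjects $\Delta^\mI$ onto $\Delta^{\mIsimP}$ via $x \mapsto [x]_{\simPI}$, the universal-role clauses \eqref{eq: FS 6} and \eqref{eq: FS 6b} are immediate, which is why the same relation witnesses both a $\Phi$-bisimulation and a $(\Phi \cup \{U\})$-bisimulation.

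For Condition~\eqref{eq: FB 2}, I would appeal directly to the definition $A^{\mIsimP}([x]_{\simPI}) = A^\mI(x)$. For the forth condition~\eqref{eq: FS 3} applied to a role name~$r$, I would take $y' = [y]_{\simPI}$; the identity $r^{\mIsimP}([x]_{\simPI},[y]_{\simPI}) = \sup\{r^\mI(x,y'') \mid y'' \in [y]_{\simPI}\}$ gives $r^\mI(x,y) \leq r^{\mIsimP}([x]_{\simPI},[y]_{\simPI})$, and $Z(y,[y]_{\simPI})$ holds by construction. When $I \in \Phi$, the same argument for a basic role $r^-$ reduces to the case of $r$ via $(r^-)^{\mIsimP}([x]_{\simPI},[y]_{\simPI}) = r^{\mIsimP}([y]_{\simPI},[x]_{\simPI})$.

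The main step is the back condition~\eqref{eq: FS 3b}. Suppose $r^{\mIsimP}([x]_{\simPI},y') > 0$ with $y' = [z]_{\simPI}$. Because $\Delta^\mI$ is finite, so is the equivalence class $[z]_{\simPI}$, and therefore the supremum $\sup\{r^\mI(x,y'') \mid y'' \in [z]_{\simPI}\}$ is attained at some $y \in [z]_{\simPI}$. Then $r^{\mIsimP}([x]_{\simPI},y') = r^\mI(x,y)$, and $[y]_{\simPI} = [z]_{\simPI} = y'$, so $Z(y,y')$ holds. Again the case of $r^-$ when $I \in \Phi$ is handled analogously. This appeal to finiteness to realize the supremum is the one nontrivial step; it is precisely the reason the lemma is stated for finite fuzzy interpretations.

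Finally, when $O \in \Phi$, I would verify Condition~\eqref{eq: FS 4bis} by unfolding $a^{\mIsimP} = [a^\mI]_{\simPI}$: the equality $[x]_{\simPI} = [a^\mI]_{\simPI}$ means $x \simPI a^\mI$, and applying the $O$-clause of~\eqref{eq: FS 4bis} to the $\Phi$-auto-bisimulation $\simPI$ at the pair $\tuple{x,a^\mI}$ forces $x = a^\mI$; the converse direction is immediate. Collecting the above verifications establishes that $Z$ is a $\Phi$-bisimulation, and together with the trivial observation that $Z$ satisfies \eqref{eq: FS 6} and \eqref{eq: FS 6b} unconditionally, it is also a $(\Phi \cup \{U\})$-bisimulation.
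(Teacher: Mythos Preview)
Your proposal is correct and follows essentially the same route as the paper's own proof: verify Conditions~\eqref{eq: FB 2}--\eqref{eq: FS 6b} directly, using the definition of $\mIsimP$ for \eqref{eq: FB 2} and \eqref{eq: FS 3}, finiteness to realise the supremum for \eqref{eq: FS 3b}, the $O$-clause of the auto-bisimulation $\simPI$ for \eqref{eq: FS 4bis}, and surjectivity of $x \mapsto [x]_{\simPI}$ for \eqref{eq: FS 6}--\eqref{eq: FS 6b}. The only stylistic difference is that you treat the inverse-role case $r^-$ explicitly by reducing it to $r$, whereas the paper writes the argument uniformly for a basic role~$R$; both are fine, and your explicit reduction is arguably clearer since Definition~\ref{def: HDJOS} specifies $r^{\mIsimP}$ only for $r\in\RN$.
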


\begin{proof}
We need to prove Conditions~\eqref{eq: FB 2}--\eqref{eq: FS 3b}, Condition~\eqref{eq: FS 4bis} when $O \in \Phi$, and Conditions~\eqref{eq: FS 6} and~\eqref{eq: FS 6b} regardless of whether $U \in \Phi$, for $\mIp = \mIsimP$. 
	
\begin{itemize}
\item Condition~\eqref{eq: FB 2} directly follows from Definition~\ref{def: HDJOS}. 
		
\item Consider Condition~\eqref{eq: FS 3} and assume that $Z(x,x')$ holds and $R^\mI(x,y) \!>\! 0$. We have $x' = [x]_{\simPI}$. Take $y' = [y]_{\simPI}$. By Definition~\ref{def: HDJOS}, $R^\mI(x,y) \leq R^{\mIsimP}([x]_{\simPI},[y]_{\simPI}) = R^\mIp(x',y')$. Clearly, $Z(y,y')$ holds. 
		
\item Consider Condition~\eqref{eq: FS 3b} and assume that $Z(x,x')$ holds and $R^\mIp(x',y') \!>\! 0$. We have $x' = [x]_{\simPI}$. Since $\mI$ is finite, by Definition~\ref{def: HDJOS}, $R^{\mIsimP}([x]_{\simPI},y') = \max\{R^\mI(x,y) \mid y \in y'\}$. Hence, there exists $y \in y' \subseteq \Delta^\mI$ such that $R^{\mIsimP}([x]_{\simPI},y') = R^\mI(x,y)$, which means $R^\mIp(x',y') = R^\mI(x,y)$. Clearly, $Z(y,y')$ holds.  
		
\item Consider Condition~\eqref{eq: FS 4bis} for the case $O \in \Phi$ and assume that $Z(x,x')$ holds. We have $x' = [x]_{\simPI}$. 
If $x = a^\mI$, then by Definition~\ref{def: HDJOS}, $x' = [x]_{\simPI} = [a^\mI]_{\simPI} = a^\mIp$.  
Conversely, if $x' = a^\mIp$, then $x \simPI a^\mI$ and, by Condition~\eqref{eq: FS 4bis} with $Z$, $x'$ and $\mIp$ replaced by $\simPI$, $a^\mI$ and $\mI$, respectively, we can derive that $x = a^\mI$. 
		
\item Condition~\eqref{eq: FS 6} holds because we can take $x' = [x]_{\simPI}$.
\item Condition~\eqref{eq: FS 6b} holds because we can take any $x \in x'$.
\myend
\end{itemize}
\end{proof}

\begin{corollary}
\label{cor: JFWKA}
Let $\mI$ be a finite fuzzy interpretation and suppose \mbox{$\IN \neq \emptyset$}. Then, $\mI$ and $\mIsimP$ are both $\Phi$-bisimilar and $(\Phi \cup \{U\})$-bisimilar.
\end{corollary}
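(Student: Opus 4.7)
The plan is to use Lemma~\ref{lemma: HDAMA} directly. That lemma already provides a concrete relation $Z = \{\tuple{x,[x]_{\simPI}} \mid x \in \Delta^\mI\}$ which is both a $\Phi$-bisimulation and a $(\Phi \cup \{U\})$-bisimulation between $\mI$ and $\mIsimP$. So the only thing left to check, according to the definition of ``$\mI$ and $\mI'$ are $\Phi$-bisimilar'' (which requires $\IN \neq \emptyset$ and the existence of a bisimulation $Z$ with $Z(a^\mI,a^\mIp)$ for every $a \in \IN$), is that this particular $Z$ links the named individuals correctly.

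This is immediate from the definition of $\mIsimP$: by Definition~\ref{def: HDJOS}, $a^{\mIsimP} = [a^\mI]_{\simPI}$ for every $a \in \IN$, and the pair $\tuple{a^\mI,[a^\mI]_{\simPI}}$ belongs to $Z$ by construction. Since $\IN \neq \emptyset$ is explicitly assumed, the bisimilarity conditions are met, and the same $Z$ witnesses both $\Phi$-bisimilarity and $(\Phi \cup \{U\})$-bisimilarity.

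There is no real obstacle here; the corollary is essentially a packaging of Lemma~\ref{lemma: HDAMA} together with the observation that $Z$ respects the interpretation of individual names. The proof should therefore consist of one or two sentences invoking Lemma~\ref{lemma: HDAMA}, recalling the definition of $a^{\mIsimP}$, and concluding that $Z(a^\mI,a^{\mIsimP})$ holds for all $a \in \IN$, which is exactly what is needed for both bisimilarity claims.
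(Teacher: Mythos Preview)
Your proposal is correct and matches the paper's approach exactly: the paper simply states that the corollary ``immediately follows from Lemma~\ref{lemma: HDAMA}.'' Your additional remark that $a^{\mIsimP} = [a^\mI]_{\simPI}$ (so that $Z(a^\mI,a^{\mIsimP})$ holds for all $a \in \IN$) is the implicit detail the paper leaves to the reader.
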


This corollary immediately follows from Lemma~\ref{lemma: HDAMA}. 

\begin{corollary}
\label{cor: JFWKA2}
Let $\mI$ be a finite fuzzy interpretation, $\mT$ a fuzzy TBox and $\mA$ a fuzzy ABox in \fALCP. Then:
\begin{enumerate}
\item $\mI \models \mT$ iff $\mIsimP \models \mT$, 
\item if $O \in \Phi$ or $\mA$ consists of only fuzzy assertions of the form $C(a) \bowtie p$, then $\mI \models \mA$ iff $\mIsimP \models \mA$.  
\end{enumerate}
\end{corollary}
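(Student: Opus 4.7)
The plan is to derive both assertions as essentially immediate consequences of the machinery already set up, namely Lemma~\ref{lemma: HDAMA}, Theorem~\ref{theorem: HGDHS}, Corollary~\ref{cor: JFWKA} and Propositions~\ref{prop: HGFKW} and~\ref{prop: IFDMS}. The central fact to exploit is that the relation $Z = \{\tuple{x,[x]_{\simPI}} \mid x \in \Delta^\mI\}$ is a $\Phi$-bisimulation between $\mI$ and $\mIsimP$, so by the Hennessy--Milner property (Theorem~\ref{theorem: HGDHS}), every $x \in \Delta^\mI$ satisfies $C^\mI(x) = C^{\mIsimP}([x]_{\simPI})$ for all concepts $C$ of \fALCP.

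For the first assertion, I would argue directly without invoking $\IN \neq \emptyset$. Fix a fuzzy TBox axiom $(C \sqsubseteq D) \rhd p \in \mT$. In the forward direction, let $x' \in \Delta^{\mIsimP}$; by construction $x' = [x]_{\simPI}$ for some $x \in \Delta^\mI$, hence $(C \to D)^{\mIsimP}(x') = (C \to D)^\mI(x) \rhd p$ by the pointwise equivalence noted above and the assumption $\mI \models (C \sqsubseteq D) \rhd p$. The backward direction is symmetric: every $x \in \Delta^\mI$ is paired with $[x]_{\simPI} \in \Delta^{\mIsimP}$, so $(C \to D)^\mI(x) = (C \to D)^{\mIsimP}([x]_{\simPI}) \rhd p$. (One could alternatively appeal to Proposition~\ref{prop: HGFKW} together with the $(\Phi \cup \{U\})$-bisimilarity of Corollary~\ref{cor: JFWKA}, but that detour needs $\IN \neq \emptyset$, whereas the direct argument does not.)

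For the second assertion, the case $\IN = \emptyset$ is trivial because then $\mA$ must be empty, so both $\mI \models \mA$ and $\mIsimP \models \mA$ hold vacuously. When $\IN \neq \emptyset$, Corollary~\ref{cor: JFWKA} yields $\mI \simP \mIsimP$, and the hypothesis ``$O \in \Phi$ or $\mA$ consists only of assertions $C(a) \bowtie p$'' is exactly the premise of Proposition~\ref{prop: IFDMS}. Invariance of $\mA$ under $\Phi$-bisimilarity between finite fuzzy interpretations, applied in both directions, therefore gives $\mI \models \mA$ iff $\mIsimP \models \mA$.

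There is no real obstacle here; the work has already been done in the preceding lemmas. The only point requiring mild care is noticing that the first assertion does not rest on the definition of $\simP$ between interpretations (which would force $\IN \neq \emptyset$) but only on the bisimulation $Z$ of Lemma~\ref{lemma: HDAMA} plus Theorem~\ref{theorem: HGDHS}, so it holds uniformly. The second assertion, in contrast, is stated in terms of ABox invariance under $\Phi$-bisimilarity and is handled by quoting Proposition~\ref{prop: IFDMS} directly, isolating the trivial $\IN = \emptyset$ case beforehand.
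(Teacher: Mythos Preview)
Your proof is correct. For the second assertion it coincides with the paper's argument (Corollary~\ref{cor: JFWKA} plus Proposition~\ref{prop: IFDMS}), only making explicit the trivial case $\IN = \emptyset$ where the paper simply writes ``without loss of generality, assume that $\IN \neq \emptyset$''.

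For the first assertion you take a genuinely different, more direct route. The paper assumes $\IN \neq \emptyset$ without loss of generality, invokes Corollary~\ref{cor: JFWKA} to obtain $(\Phi\cup\{U\})$-bisimilarity, and then applies Proposition~\ref{prop: HGFKW}. You instead work pointwise: from Lemma~\ref{lemma: HDAMA} and Theorem~\ref{theorem: HGDHS} you get $(C\to D)^\mI(x)=(C\to D)^{\mIsimP}([x]_{\simPI})$ for every $x$, and since $x\mapsto[x]_{\simPI}$ is surjective onto $\Delta^{\mIsimP}$ the TBox equivalence follows in both directions. Your argument is self-contained and avoids the harmless but slightly artificial ``add a fresh individual name'' step; the paper's approach has the virtue of uniformity, reducing both assertions to the invariance Propositions~\ref{prop: HGFKW} and~\ref{prop: IFDMS}. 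Both are entirely valid.
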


\begin{proof}
Without loss of generality, assume that $\IN \neq \emptyset$. 
By Corollary~\ref{cor: JFWKA}, $\mI$ and $\mIsimP$ are both $\Phi$-bisimilar and $(\Phi \cup \{U\})$-bisimilar.

By Proposition~\ref{prop: HGFKW}, all fuzzy TBoxes in \fALCP are invariant under $(\Phi \cup \{U\})$-bisimilarity between finite fuzzy interpretations. 
Hence, $\mI \models \mT$ iff $\mIsimP \models \mT$. 

For the second assertion, suppose that $O \in \Phi$ or $\mA$ consists of only fuzzy assertions of the form $C(a) \bowtie p$. By Proposition~\ref{prop: IFDMS}, $\mA$ is invariant under $\Phi$-bisimilarity between finite fuzzy interpretations. Hence, $\mI \models \mA$ iff $\mIsimP \models \mA$.
\myend
\end{proof}

The minimality in the following theorem is defined using the order in which fuzzy interpretations are compared w.r.t.\ the sizes of their domains.

\begin{theorem}
\label{theorem: HSJAO}
Let $\mI$ be a finite fuzzy interpretation. 
\begin{enumerate}
\item\label{item: GHDJW0} If $(1 \fto x) = x$ and ($y \fOr z = 0$ iff $y = z = 0$), for all $x,y,z \in L$, then $\mIsimP$ is a minimal fuzzy interpretation that validates the same set of fuzzy TBox axioms in \fALCP as $\mI$. 
\item If $\IN \neq \emptyset$ and either $U \in \Phi$ or $\mI$ is connected w.r.t.\ $\Phi$, then:
	\begin{enumerate}
	\item\label{item: GHDJW} $\mIsimP$ is a minimal fuzzy interpretation $\Phi$-bisimilar to $\mI$, 
	\item\label{item: GHDJW2} $\mIsimP$ is a minimal fuzzy interpretation that validates the same set of fuzzy assertions of the form $C(a) \bowtie p$ in \fALCP as $\mI$.
	\end{enumerate}
\end{enumerate}
\end{theorem}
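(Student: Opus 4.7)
The plan is to treat the three assertions by a uniform template: $\mIsimP$ already has the claimed property by Corollary~\ref{cor: JFWKA2} (with Corollary~\ref{cor: JFWKA} supplying $\Phi$-bisimilarity for part~\ref{item: GHDJW}), and minimality in each case will be proved by deriving a contradiction from a supposed $\mJ$ with $|\Delta^\mJ| < |\Delta^{\mIsimP}|$. A single auxiliary fact underpins the two bisimilarity-based parts: the $\Phi$-auto-bisimilarity on $\mIsimP$ coincides with the identity. To see this, I take any $\Phi$-auto-bisimulation $W$ of $\mIsimP$ containing $\tuple{[x]_{\simPI},[y]_{\simPI}}$ and compose with the canonical $Z = \{\tuple{x,[x]_{\simPI}} \mid x \in \Delta^\mI\}$ from Lemma~\ref{lemma: HDAMA}; by Proposition~\ref{prop: HFHSJ 2}, $Z \circ W \circ Z^{-1}$ is a $\Phi$-auto-bisimulation of $\mI$ that contains $\tuple{x,y}$, hence $x \simPI y$ and $[x]_{\simPI} = [y]_{\simPI}$.

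For assertion~\ref{item: GHDJW}, assume $\mJ \simP \mI$; by composing with the canonical bisimulation I obtain $Z \subseteq \Delta^\mJ \times \Delta^{\mIsimP}$ witnessing $\mJ \simP \mIsimP$, with $Z(a^\mJ, a^{\mIsimP})$ for every $a \in \IN$. The next step is to show $Z$ is total on the $\mIsimP$-side. If $U \in \Phi$, this follows from Condition~\eqref{eq: FS 6b} applied to $Z \neq \emptyset$. If $\mI$ is $\Phi$-connected, then mapping its connection paths through Lemma~\ref{lemma: HDAMA} makes $\mIsimP$ $\Phi$-connected too, so for each $[x]_{\simPI}$ I traverse such a path and iteratively apply Condition~\eqref{eq: FS 3b} starting from $Z(a^\mJ, a^{\mIsimP})$ to obtain a preimage of $[x]_{\simPI}$ in $\Delta^\mJ$. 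From $|\Delta^\mJ| < |\Delta^{\mIsimP}|$, pigeonhole then supplies $y \in \Delta^\mJ$ with $Z(y,[x_1]_{\simPI})$ and $Z(y,[x_2]_{\simPI})$ for two distinct $[x_1]_{\simPI}, [x_2]_{\simPI}$; consequently $Z^{-1} \circ Z$ is a $\Phi$-auto-bisimulation of $\mIsimP$ relating them, contradicting the auxiliary fact.

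Assertion~\ref{item: GHDJW2} reduces to~\ref{item: GHDJW}: if $\mJ$ validates the same fuzzy assertions $C(a) \bowtie p$ as $\mI$, linearity of $L$ together with the availability of all four comparisons in $\bowtie$ forces $C^\mI(a^\mI) = C^\mJ(a^\mJ)$ for every concept $C$ and $a \in \IN$, i.e., $a^\mI \equivP a^\mJ$. Corollary~\ref{cor: fG H-M 4 c} then delivers $\mI \simP \mJ$, and assertion~\ref{item: GHDJW} forces $|\Delta^\mJ| \geq |\Delta^{\mIsimP}|$.

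For assertion~\ref{item: GHDJW0}, suppose $\mJ$ validates the same fuzzy TBox axioms as $\mI$ (equivalently, as $\mIsimP$ by Corollary~\ref{cor: JFWKA2}) with $|\Delta^\mJ| < |\Delta^{\mIsimP}|$. By Theorem~\ref{theorem: HGDHS}, distinct elements of $\Delta^{\mIsimP}$ are $\equivP$-separated by some concept, so any theory-matching $[x]_{\simPI} \mapsto y$ with $C^{\mIsimP}([x]_{\simPI}) = C^\mJ(y)$ for every concept $C$ would be injective and contradict cardinality; hence some $[x]_{\simPI}$ admits no matching $y$. For each $y \in \Delta^\mJ$ I fix $C_y$ with $C_y^{\mIsimP}([x]_{\simPI}) \neq C_y^\mJ(y)$, set $p_y = C_y^{\mIsimP}([x]_{\simPI})$, and form the characteristic concept $D_y = \triangle(C_y \to p_y) \mand \triangle(p_y \to C_y)$; using~\eqref{eq: fto-when-1}, linearity of $L$, \eqref{eq: triangle} and~\eqref{eq: req7}, $D_y$ evaluates to $1$ exactly where $C_y$ equals $p_y$ and to $0$ otherwise. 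The finite conjunction $E = \mybigsqcap\{D_y \mid y \in \Delta^\mJ\}$ then satisfies $E^{\mIsimP}([x]_{\simPI}) = 1$ while $E^\mJ(y) = 0$ for every $y \in \Delta^\mJ$ by~\eqref{eq: UYRJH}, so the fuzzy TBox axiom $(E \sqsubseteq 0) \geq 1$ is validated by $\mJ$ but refuted at $[x]_{\simPI}$ in $\mIsimP$, contradicting the hypothesis. The main subtlety I anticipate lies here: the stated premises $(1 \fto x) = x$ and the $\fOr$-no-zero-divisors condition are exactly what ensure that TBox axioms faithfully capture infima of concepts and, via $\fOr$-joins, also suprema, so I would double-check whether a more direct proof exploiting these premises yields a cleaner witness than the $\triangle$-encoded $E$ above.
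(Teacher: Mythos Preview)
Your proof is correct. For assertions~\ref{item: GHDJW} and~\ref{item: GHDJW2} it is essentially the paper's argument recast: your auxiliary fact that $\simP$ on $\mIsimP$ is the identity is exactly what the paper extracts (phrased as $v_i \not\equivP v_j$), and your pigeonhole contradiction is the contrapositive of the paper's direct count of $n$ distinct preimages $u_i$.

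For assertion~\ref{item: GHDJW0} the two routes genuinely diverge. The paper builds, from $\mIsimP$ alone, concepts $D_i$ with $D_i^{\mIsimP}(v_j)=[i=j]$, then sets $E = D_1 \mor \cdots \mor D_n$ and $E_i = E$ with $D_i$ removed; the separating TBox axioms are $(1 \sqsubseteq E) > 0$ (validated) versus $(1 \sqsubseteq E_i) > 0$ (refuted at~$v_i$). Both stated premises are used here: $(1 \fto x)=x$ to read off the value of $(1 \to E)$, and the $\fOr$-condition to know that a $\mor$-join with a single $1$ is nonzero while an all-zero join is zero. Your approach instead fixes a putative smaller $\mJ$, finds an unmatched $[x]_{\simPI}$, and assembles a $\mJ$-dependent conjunction $E$ with $E^{\mIsimP}([x]_{\simPI})=1$ and $E^\mJ\equiv 0$; the separating axiom $(E \sqsubseteq 0) \geq 1$ then relies only on~\eqref{eq: fto-when-1}, \eqref{eq: triangle}, \eqref{eq: UYRJH} and~\eqref{eq: req7}. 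So your worry is unfounded: your argument is not missing a use of the premises, it simply does not need them (beyond $0 \neq 1$), and in that sense it is sharper than the paper's. What the paper's construction buys is a \emph{uniform} family of witnesses built purely from $\mIsimP$, independent of the competitor~$\mJ$; what yours buys is a more elementary argument with weaker hypotheses.
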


The premises of the first assertion hold, e.g., when $L = [0,1]$, $\fand$ is a t-norm, $\fto$ is its residuum and $\fOr$ is the corresponding s-norm.

\begin{proof}
Without loss of generality, assume that $\IN \neq \emptyset$. 
By Corollaries~\ref{cor: JFWKA}	and~\ref{cor: JFWKA2}, $\mIsimP$ validates the same set of fuzzy TBox axioms in \fALCP as $\mI$, is $\Phi$-bisimilar to $\mI$, and validates the same set of fuzzy assertions of the form $C(a) \bowtie p$ in \fALCP as $\mI$. We only need to justify its minimality. 
	
Since $\mI$ is finite, $\mIsimP$ is also finite. 
Let $\Delta^{\mIsimP} = \{v_1,\ldots,v_n\}$, where $v_1,\ldots,v_n$ are pairwise distinct and $v_i = [x_i]_{\simPI}$ with $x_i \in \Delta^\mI$, for $1 \leq i \leq n$. By Lemma~\ref{lemma: HDAMA}, $x_i \simP v_i$ for all $1 \leq i \leq n$. Let $i$ and $j$ be arbitrary indices such that $1 \leq i,j\leq n$ and $i \neq j$. We have $x_i \not\simP x_j$. Since $x_i \simP v_i$ and $x_j \simP v_j$, by Theorem~\ref{theorem: HGDHS}, it follows that $v_i \not\equivP v_j$. Therefore, there exists a concept $C_{i,j}$ of \fALCP such that $C_{i,j}^{\mIsimP}(v_i) \neq C_{i,j}^{\mIsimP}(v_j)$. Let $D_{i,j} = \triangle(C_{i,j} \to C_{i,j}^{\mIsimP}(v_i))$ if $C_{i,j}^{\mIsimP}(v_i) < C_{i,j}^{\mIsimP}(v_j)$, and $D_{i,j} = \triangle(C_{i,j}^{\mIsimP}(v_i) \to C_{i,j})$ otherwise (i.e., when $C_{i,j}^{\mIsimP}(v_i) > C_{i,j}^{\mIsimP}(v_j)$). By~\eqref{eq: fto-when-1} and \eqref{eq: triangle}, we have $D_{i,j}^{\mIsimP}(v_i) = 1$ and $D_{i,j}^{\mIsimP}(v_j) = 0$. Let $D_i = D_{i,1} \mand\ldots\mand D_{i,i-1} \mand D_{i,i+1} \mand\ldots\mand D_{i,n}$. By Proposition~\ref{prop: JHFJH}, $D_i^{\mIsimP}(v_i) = 1$ and $D_i^{\mIsimP}(v_j) = 0$. Let $E = D_1 \mor\ldots\mor D_n$ and $E_i = D_1 \mor\ldots\mor D_{i-1} \mor D_{i+1} \mor\ldots\mor D_n$. 

Consider the first assertion and assume that $(1 \fto x) = x$ and $y \fOr z = 0$ iff $y = z = 0$, for all $x,y,z \in L$. The fuzzy interpretation $\mIsimP$ validates the fuzzy TBox axiom $(1 \sqsubseteq E) > 0$ but does not validate $(1 \sqsubseteq E_i) > 0$ for any $1 \leq i \leq n$. Any other fuzzy interpretation with such properties must have at least $n$ elements in the domain. That is, $\mIsimP$ is a minimal fuzzy interpretation that validates the same set of fuzzy TBox axioms in \fALCP as $\mI$.
	
Consider the second assertion of the theorem and suppose that either $U \in \Phi$ or $\mI$ is connected w.r.t.\ $\Phi$. Let \mbox{$Z = \{ \tuple{x,[x]_{\simPI}} \mid x \in \Delta^\mI\}$}. By Lemma~\ref{lemma: HDAMA}, $Z$ is a $\Phi$-bisimulation between $\mI$ and~$\mIsimP$. 
	
\begin{itemize}
\item Consider the assertion~\ref{item: GHDJW} of the theorem and let $\mIp$ be a finite fuzzy interpretation $\Phi$-bisimilar to~$\mI$. There exists a $\Phi$-bisimulation $Z'$ between $\mIp$ and $\mI$ such that $Z'(a^\mIp, a^\mI)$ holds for all $a \in \IN$. Let $Z'' = Z' \circ Z$. By Proposition~\ref{prop: HFHSJ 2}, $Z''$ is a $\Phi$-bisimulation between $\mIp$ and $\mIsimP$. Furthermore, 
\begin{equation}
Z''(a^\mIp, a^{\mIsimP}) \textrm{ holds for all $a \in \IN$.} \label{eq: HFOWA}
\end{equation}
		
	\begin{itemize}
	\item Consider the case where $\mI$ is connected w.r.t.\ $\Phi$. Thus, $\mIsimP$ is also connected w.r.t.\ $\Phi$. By Condition~\eqref{eq: FS 3b} with $\mI$, $\mIp$ and $Z$ replaced by $\mIp$, $\mIsimP$ and $Z''$, respectively, it follows from \eqref{eq: HFOWA} that, for every $1 \leq i \leq n$, there exists $u_i \in \Delta^\mIp$ such that $Z''(u_i,v_i)$ holds.
	\item Consider the case where $U \in \Phi$. Since $\IN \neq \emptyset$, by \eqref{eq: HFOWA} and Condition~\eqref{eq: FS 6b} with $\mI$, $\mIp$ and $Z$ replaced by $\mIp$, $\mIsimP$ and $Z''$, respectively, for every $1 \leq i \leq n$, there exists $u_i \in \Delta^\mIp$ such that $Z''(u_i,v_i)$ holds.
	\end{itemize}
		
Thus, $u_i \simP v_i$ for all $1 \leq i \leq n$. By Theorem~\ref{theorem: HGDHS}, it follows that $u_i \equivP v_i$ for all $1 \leq i \leq n$. Since $v_i \not\equivP v_j$ for any $i \neq j$, it follows that $u_i \not\equivP u_j$ for any $i \neq j$. Therefore the cardinality of $\Delta^\mIp$ is greater than or equal to $n$. 
		
\item Consider the assertion~\ref{item: GHDJW2} of the theorem and let $\mIp$ be a finite fuzzy interpretation that validates the same set of fuzzy assertions of the form $C(a) \bowtie p$ in \fALCP as $\mI$. Thus, $a^\mIp \equivP a^\mI$ for all $a \in \IN$. By Corollary~\ref{cor: fG H-M 4 c}, $\mIp$ and $\mI$ are $\Phi$-bisimilar. By the assertion~\ref{item: GHDJW} proved above, it follows that the cardinality of $\Delta^\mIp$ is greater than or equal to $n$. 
\myend
\end{itemize}
\end{proof}

Given a fuzzy interpretation $\mI$, we say that an individual $x \in \Delta^\mI$ is {\em $\Phi$-reachable} (from a named individual) if there exist $a \in \IN$, $x_0,\ldots,x_n \in \Delta^\mI$ and basic roles $R_1,\ldots,R_n$ w.r.t.~$\Phi$ such that $x_0 = a^\mI$, $x_n = x$ and $R_i^\mI(x_{i-1},x_i) > 0$ for all $1 \leq i \leq n$. 

\begin{corollary}
\label{cor: HFJHW}
Suppose $U \notin \Phi$ and $\IN \neq \emptyset$. Let $\mI$ be a finite fuzzy interpretation and $\mIp$ the fuzzy interpretation obtained from $\mI$ by deleting from the domain all $\Phi$-unreachable individuals and restricting the interpretation function accordingly. Then:
\begin{enumerate}
\item $\mIpsimP$\label{item: HFJHW 1} is a minimal fuzzy interpretation $\Phi$-bisimilar to $\mI$, 
\item $\mIpsimP$\label{item: HFJHW 2} is a minimal fuzzy interpretation that validates the same set of fuzzy assertions of the form $C(a) \bowtie p$ in \fALCP as $\mI$.
\end{enumerate}
\end{corollary}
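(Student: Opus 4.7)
The proof strategy is to reduce both claims to Theorem~\ref{theorem: HSJAO}(2) applied to $\mIp$ rather than $\mI$. By construction $\mIp$ is $\Phi$-connected, and because named individuals are trivially $\Phi$-reachable (take the empty witness $n=0$), none of them is deleted; hence $\IN \neq \emptyset$ is preserved in $\mIp$ and $a^\mIp = a^\mI$ for every $a \in \IN$. Consequently, Theorem~\ref{theorem: HSJAO} yields that $\mIpsimP$ is both a minimal fuzzy interpretation $\Phi$-bisimilar to $\mIp$ and a minimal one validating the same fuzzy assertions of the form $C(a) \bowtie p$ in \fALCP as $\mIp$. Transferring these statements from $\mIp$ back to $\mI$ will only require establishing $\mI \simP \mIp$.

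The crux is therefore to exhibit a bisimulation between $\mI$ and $\mIp$. I claim that the identity-like relation $Z_0 = \{\tuple{x,x} \mid x \in \Delta^\mIp\} \subseteq \Delta^\mIp \times \Delta^\mI$ is a $\Phi$-bisimulation between $\mIp$ and $\mI$. Condition~\eqref{eq: FB 2} is immediate from $A^\mIp = A^\mI{\restriction}_{\Delta^\mIp}$, condition~\eqref{eq: FS 4bis} (when $O \in \Phi$) follows from $a^\mIp = a^\mI$, and the forth condition~\eqref{eq: FS 3} is trivial because basic roles restricted from $\mI$ to $\Delta^\mIp$ agree with their interpretation in~$\mIp$. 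The only point needing argument is the back condition~\eqref{eq: FS 3b}: given $x \in \Delta^\mIp$ and $R^\mI(x,y) > 0$ for a basic role~$R$ w.r.t.\ $\Phi$, the $\Phi$-reachability witness for $x$ extends by appending the step $R(x,y)$ to a reachability witness for $y$, so $y \in \Delta^\mIp$ and $R^\mIp(x,y) = R^\mI(x,y)$; take $y' = y$. Conditions~\eqref{eq: FS 6} and~\eqref{eq: FS 6b} are not imposed, precisely because $U \notin \Phi$ — this is what makes restricting the domain to the reachable part admissible in the first place. Since $Z_0(a^\mIp,a^\mI)$ holds for each $a \in \IN$, we conclude $\mIp \simP \mI$.

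With $\mI \simP \mIp$ in hand, the first assertion follows by symmetry and composition of bisimulations (Proposition~\ref{prop: HFHSJ 2}): any finite fuzzy interpretation $\Phi$-bisimilar to $\mI$ is also $\Phi$-bisimilar to $\mIp$, so by Theorem~\ref{theorem: HSJAO}(\ref{item: GHDJW}) its domain has cardinality at least $|\Delta^{\mIpsimP}|$; and $\mIpsimP$ itself is $\Phi$-bisimilar to $\mI$ by composing the witness of $\mI \simP \mIp$ with the one from Corollary~\ref{cor: JFWKA}. For the second assertion, Theorem~\ref{theorem: HGDHS} combined with $\mI \simP \mIp$ gives $C^\mI(a^\mI) = C^\mIp(a^\mIp)$ for every concept $C$ of \fALCP and every $a \in \IN$, so $\mI$ and $\mIp$ validate exactly the same fuzzy assertions of the form $C(a) \bowtie p$; Theorem~\ref{theorem: HSJAO}(\ref{item: GHDJW2}) applied to $\mIp$ then yields the desired lower bound, and Corollary~\ref{cor: JFWKA2} ensures $\mIpsimP$ actually validates the same set. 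The main obstacle in the whole argument is purely bookkeeping around condition~\eqref{eq: FS 3b}, which reduces to the elementary fact that $\Phi$-reachability is closed under taking basic-role successors.
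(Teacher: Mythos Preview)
Your proof is correct and follows essentially the same route as the paper's: the paper states that the corollary follows from Theorem~\ref{theorem: HSJAO}, Propositions~\ref{prop: HFHSJ 2} and~\ref{prop: IFDMS}, together with the observations that $\mIp$ is $\Phi$-connected and that $\{\tuple{x,x} \mid x \in \Delta^\mIp\}$ is a $\Phi$-bisimulation between $\mI$ and $\mIp$. The only superficial difference is that you invoke Theorem~\ref{theorem: HGDHS} directly for invariance of $C(a) \bowtie p$ instead of citing Proposition~\ref{prop: IFDMS}, but the latter's proof for that case is precisely an application of Theorem~\ref{theorem: HGDHS}, so the content is the same.
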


This corollary follows from Theorem~\ref{theorem: HSJAO}, Propositions~\ref{prop: HFHSJ 2} and~\ref{prop: IFDMS} and the observations that $\mIp$ is connected w.r.t.~$\Phi$ and $\Phi$-bisimilar to $\mI$ by using the $\Phi$-bisimulation $\{\tuple{x,x} \mid x \in \Delta^\mIp\}$.


\section{An Efficient Algorithm for Minimizing Finite Fuzzy Interpretations}
\label{section: computation}

In the preprint~\cite{abs-2010-15671}, together with a coauthor we gave an efficient algorithm for computing the partition that corresponds to the largest crisp bisimulation of a given finite fuzzy labeled graph. Its complexity is of order $O((m \log{l} + n) \log{n})$, where $n$, $m$ and $l$ are the number of vertices, the number of nonzero edges and the number of different fuzzy degrees
of edges of the input graph, respectively. 
In this section, we first recall related notions defined in~\cite{abs-2010-15671} and that algorithm, and then exploit it to create a new algorithm with the same complexity order that, given a finite fuzzy interpretation $\mI$ and $\{\triangle\} \subseteq \Phi \subseteq \{\triangle$, $\circ$, $\mor_r$, $*$, $?$, $I$, $U$, $O\}$, computes the quotient fuzzy interpretation $\mIsimP$, which has been proved to be minimal and equivalent to $\mI$ w.r.t.\ certain criteria. 

A {\em fuzzy labeled graph}, called a {\em fuzzy graph} for short, is a structure $G = \tuple{V, E, \Label, \SV, \SE}$, where $V$ is a set of vertices, $\SV$ (respectively, $\SE$) is a set of vertex labels (respectively, edge labels), \mbox{$E: V \times \SE \times V \to L$} is a fuzzy set of labeled edges, and $\Label: V \to (\SV \to L)$ is a function that labels vertices. 
It is {\em finite} if all the sets $V$, $\SV$ and $\SE$ are finite. 
If $E(x,r,y) > 0$ (respectively, $E(x,r,y) = 0$), then $\tuple{x,y}$ is called a {\em nonzero $r$-edge} (respectively, {\em zero-$r$-edge}) of~$G$. 
For $x \in V$, $r \in \SE$ and $Y \subseteq V$, we denote $E(x,r,Y) = \{E(x,r,y) \mid y \in Y\}$. 

In this section, let $G = \tuple{V, E, \Label, \SV, \SE}$ be a finite fuzzy graph. 
We will use 
$\bbP$, $\bbQ$ and $\bbS$ to denote partitions of $V$, 
$X$ and $Y$ to denote non-empty subsets of $V$, 
and $r$ to denote an edge label from $\SE$. 
By $\bbP_0$ we denote the partition of $V$ that corresponds to the equivalence relation 
\[ \textrm{$\{\tuple{x,x'} \in V^2 \mid$ $\Label(x) = \Label(x')$ and $\sup E(x,r,V) = \sup E(x',r,V)$ for all $r \in \SE\}$.} \] 

We say that $\bbP$ is a {\em refinement} of $\bbQ$ if, for every $X \in \bbP$, there exists $Y \in \bbQ$ such that $X \subseteq Y$. In that case we also say that $\bbQ$ is {\em coarser} than $\bbP$. By this definition, every partition is a refinement of itself. 
Given a refinement $\bbP$ of a partition $\bbQ$, a block $Y \in \bbQ$ is {\em compound} w.r.t.\ $\bbP$ if there exists $X \in \bbP$ such that $X \subset Y$. 

A non-empty binary relation $Z \subseteq V \times V$ is called a {\em crisp auto-bisimulation} of $G$, or a {\em bisimulation} of $G$ for short, if the following conditions hold (for all possible values of the free variables): 
\begin{eqnarray}
Z(x,x') & \to & \Label(x) = \Label(x') \label{eq: CB 1} \\[1ex]
[Z(x,x') \land (E(x,r,y) \!>\! 0)] & \to & \E y'\,[(E(x,r,y) \!\leq\! E(x',r,y')) \land Z(y,y')] \label{eq: CB 2} \\[1ex]
[Z(x,x') \land (E(x',r,y') \!>\! 0)] & \to & \E y\,[(E(x',r,y') \!\leq\! E(x,r,y)) \land Z(y,y')], \label{eq: CB 3}
\end{eqnarray}
where $\to$ and $\land$ denote the usual crisp logical connectives. 

The largest bisimulation of a fuzzy graph exists and is an equivalence relation.
By the {\em partition that corresponds to the largest bisimulation of $G$} we mean the partition of $V$ that corresponds to the equivalence relation being the largest bisimulation of~$G$. 

We say that $X$ is {\em stable w.r.t.\ $\tuple{Y,r}$} (and $G$) if $\sup E(x,r,Y) = \sup E(x',r,Y)$ for all $x,x' \in X$. 
A partition $\bbP$ is {\em stable w.r.t.\ $\tuple{Y,r}$} (and $G$) if every $X \in \bbP$ is stable w.r.t.\ $\tuple{Y,r}$. Next, $\bbP$ is {\em stable} (w.r.t.\ $G$) if it is stable w.r.t.\ $\tuple{Y,r}$ for all $Y \in \bbP$ and $r \in \SE$. 

For $\emptyset \subset Y' \subset Y$, by $\itSplit(X,\tuple{Y',Y,r})$ we denote the coarsest partition of $X$ such that each of its blocks is stable w.r.t.\ both $\tuple{Y',r}$ and $\tuple{Y\setminus Y',r}$. 
%
We also define 
\[ \itSplit(\bbP,\tuple{Y',Y,r}) = \bigcup \{\itSplit(X,\tuple{Y',Y,r}) \mid X \in \bbP\}. \]
Clearly, this set is the coarsest refinement of $\bbP$ that is stable w.r.t.\ both $\tuple{Y',r}$ and $\tuple{Y\setminus Y',r}$. 

\begin{algorithm}[t]
	\caption{\CompCB\label{algCompCB}}
	\Input{a finite fuzzy graph $G = \tuple{V, E, \Label, \SV, \SE}$.}
	\Output{the partition that corresponds to the largest bisimulation of~$G$.}
	
	\smallskip
	
	let $\bbP = \bbP_0$ and $\bbQ_r = \{V\}$ for all $r \in \SE$\label{step: UIIJW 1}\;
	\lIf{$\bbP = \{V\}$}{\Return $\bbP$} 
	\While{there exists $r \in \SE$ such that $\bbQ_r \neq \bbP$\label{step: UIIJW 3}}{ 
		choose such an $r$ and a compound block $Y \in \bbQ_r$ with respect to $\bbP$\;
		choose a block $Y' \in \bbP$ such that $Y' \subset Y$ and $|Y'| \leq |Y|/2$\label{step: UIIJW 5}\;
		$\bbP := \itSplit(\bbP, \tuple{Y',Y,r})$\label{step: UIIJW 6}\;
		refine $\bbQ_r$ by replacing $Y$ with $Y'$ and $Y \setminus Y'$\label{step: UIIJW 7}\;
	}
\Return $\bbP$\label{step: UIIJW 8}\;
\end{algorithm}

Algorithm~\ref{algCompCB} (on page \pageref{algCompCB}) was designed by us and a coauthor~\cite{abs-2010-15671} for computing the partition that corresponds to the largest bisimulation of~$G$. It exploits the fact that $\bbP$ is the partition that corresponds to the largest bisimulation of $G$ iff it is the coarsest stable refinement of~$\bbP_0$~\cite[Lemma~3.2]{abs-2010-15671}. The algorithm is explained in~\cite{abs-2010-15671} as follows. It starts with initializing $\bbP$ to $\bbP_0$. If $\bbP$ is a singleton, then $\bbP$ is stable and the algorithm returns it as the result. Otherwise, the algorithm repeatedly refines $\bbP$ to make it stable as follows. The algorithm maintains a partition $\bbQ_r$ of $V$, for each $r \in \SE$, such that $\bbP$ is a refinement of $\bbQ_r$ and $\bbP$ is stable with respect to $\tuple{Y,r}$ for all $Y \in \bbQ_r$. 
If at some stage $\bbQ_r = \bbP$ for all $r \in \SE$, then $\bbP$ is stable and the algorithm terminates with that~$\bbP$. The variables $\bbQ_r$ are initialized to $\{V\}$ for all $r \in \SE$. In each iteration of the main loop, the algorithm chooses $\bbQ_r \neq \bbP$, $Y \in \bbQ_r$ and $Y' \in \bbP$ such that $Y' \subset Y$ and $|Y'| \leq |Y|/2$, then it replaces $\bbP$ with \mbox{$\itSplit(\bbP, \tuple{Y',Y,r})$} and replaces $Y$ in $\bbQ_r$ with $Y'$ and $Y \setminus Y'$. In this way, the chosen $\bbQ_r$ is refined (and $\bbP$ may also be  refined), so the loop will terminate after a number of iterations. 
The condition $|Y'| \leq |Y|/2$ reflects the idea ``process the smaller half (or a smaller component) first'' from Hopcroft's algorithm~\cite{Hopcroft71} and Paige and Tarjan's algorithm~\cite{PaigeT87}. Without using it the algorithm still terminates with a correct result, but the condition is essential for reducing the complexity order. 

\begin{example}\label{example: HDJQK3}
Let $G = \tuple{V, E, \Label, \SV, \SE}$ be the fuzzy graph identical to the fuzzy interpretations $\mI$ specified in Example~\ref{example: HDJQK3}, with $\SV = \{A\}$, $\SE = \{r\}$, $V = \{u,v,w\}$, $\Label(u)(A) = 1$, $\Label(v)(A) = \Label(w)(A) = 0.5$ and $E = \{\tuple{u,r,v}\!:\!0.7$, $\tuple{u,r,w}\!:\!0.9$, $\tuple{v,r,v}\!:\!0.6$, $\tuple{v,r,w}\!:\!0.8$, $\tuple{w,r,v}\!:\!0.8\}$. 
To increase the readability, we recall the illustration below.
\begin{center}
\begin{tikzpicture}[->,>=stealth,auto]
\node (S) {};
\node (u) [node distance=0.0cm, below of=S] {$u\!:\!1$};
\node (bu) [node distance=2.0cm, below of=u] {};		
\node (v) [node distance=1.5cm, left of=bu] {$v\!:\!0.5$};
\node (w) [node distance=1.5cm, right of=bu] {$w\!:\!0.5$};
\draw (u) to node[left]{0.7} (v);
\draw (u) to node[right]{0.9} (w);
\draw (v) edge [bend left=20] node[above]{0.8} (w);
\draw (w) edge [bend left=20] node[below]{0.8} (v);
\draw (v) edge[loop below,out=-120,in=-60,looseness=10] node{$0.6$} (v);
\end{tikzpicture}
\end{center}
Let's consider the run of Algorithm~\ref{algCompCB} for $G$. After executing the statement~\ref{step: UIIJW 1}, we have $\bbP = \{\{u\},\{v,w\}\}$ and $\bbQ_r = \{\{u,v,w\}\}$. Consider the first iteration of the ``while'' loop. After executing the statement~\ref{step: UIIJW 5} we have $Y = \{u,v,w\}$ and $Y' = \{u\}$. The statement~\ref{step: UIIJW 6} does not make any change to $\bbP$, but the statement~\ref{step: UIIJW 7} makes $\bbQ_r$ equal to $\bbP$. Thus, the loop terminates after the first iteration and the partition returned by the algorithm is $\{\{u\},\{v,w\}\}$.
\myend
\end{example}

It is proved in~\cite{abs-2010-15671} that Algorithm~\ref{algCompCB} is correct. The work~\cite{abs-2010-15671} also specifies how to implement Algorithm~\ref{algCompCB} so that its complexity is of the mentioned order $O((m \log{l} + n) \log{n})$. The implementation uses data structures that allow us to get $\sup E(x,r,Y)$ in time $O(\log{l})$ for every $x, y \in V$ and $r \in \SE$ with $E(x,r,y) > 0$, where $Y$ is the block of the resulting partition $\bbP$ with $y \in Y$ and $l$~is the number of different fuzzy degrees of edges of the input graph.\footnote{If $e$ is the edge $\tuple{x,r,y}$ with $E(x,r,y) > 0$, then $\sup E(x,r,Y) = e.\blockEdge.\maxKey()$.} The data structures also allow us to get in constant time the block of the resulting partition that contains a given vertex $x \in V$, as well as an element of a given block of the resulting partition. 


\begin{algorithm}[t]
\caption{\MinInt\label{alg: minInt}}
\Input{a finite fuzzy interpretation $\mI$ and $\{\triangle\} \subseteq \Phi \subseteq \{\triangle, \circ, \mor_r, *, ?, I, U, O\}$.}
\Output{the quotient fuzzy interpretation $\mIsimP$.}
	
\smallskip

construct a fuzzy graph $G = \tuple{V, E, \Label, \SV, \SE}$ such that:\label{step: JHDJH 1}
\begin{tabular}{l}
if $O \notin \Phi$, then $\SV = \CN$, else $\SV = \CN \cup \IN$;\\[0.5ex]
if $I \notin \Phi$, then $\SE = \RN$, else $\SE = \RN \cup \{\cnv{r} \mid r \in \RN\}$;\\[0.5ex]
$V = \Delta^\mI$;\\[0.5ex] 
$E(x,R,y) = R^\mI(x,y)$, for $x,y \in V$ and $R \in \SE$;\\[0.5ex] 
$\Label(x)(A) = A^\mI(x)$, for $x \in V$ and $A \in \CN$;\\[0.5ex] 
$\Label(x)(a)$ = (if $x = a^\mI$ then 1 else 0), for $x \in V$ and $a \in \IN$, when $O \in \Phi$;
\end{tabular}

execute Algorithm~\ref{algCompCB} for $G$ and let $\bbP$ be the obtained partition\label{step: minInt 2};\label{step: JHDJH 2}

let $\mJ$ be the fuzzy interpretation specified as follows:\label{step: JHDJH 3}
\begin{tabular}{l}
$\Delta^\mJ = \bbP$;\\[0.5ex]
for $a \in \IN$, $a^\mJ$ is the block of $\bbP$ that contains $a^\mI$;\\[0.5ex]
for $A \in \CN$ and $X \in \bbP$, $A^\mJ(X) = A^\mI(x)$ for an arbitrary $x \in X$;\\[0.5ex]
for $r \in \RN$ and $x,y \in \Delta^\mI$ with $r^\mI(x,y) > 0$, $r^\mJ(X,Y) = \sup E(x,r,Y)$,\\ 
\mbox{\hspace{1em}}where $X$ and $Y$ are the blocks of $\bbP$ that contain $x$ and $y$, respectively;\\[0.5ex]
for $r \in \RN$ and $X,Y \in \bbP$, if $r^\mJ(X,Y)$ is not set above, then $r^\mJ(X,Y) = 0$;
\end{tabular}

\Return $\mJ$\label{step: JHDJH 4}\;

\end{algorithm}

Algorithm~\ref{alg: minInt} (on page~\pageref{alg: minInt}) is our algorithm for computing the quotient fuzzy interpretation $\mIsimP$ for a given finite fuzzy interpretation $\mI$ and $\{\triangle\} \subseteq \Phi \subseteq \{\triangle, \circ, \mor_r, *, ?, I, U, O\}$. 
It first constructs a fuzzy graph $G = \tuple{V, E, \Label, \SV, \SE}$ that corresponds to $\mI$ w.r.t.~$\Phi$. In particular, $V = \Delta^\mI$, each concept name $A$ is used as a vertex label, with $\Label(x)(A) = A^\mI(x)$ for $x \in V$, and if $O \in \Phi$, then each individual name $a$ is also used as a vertex label, with $\Label(x)(a)$ set to the truth value of $x = a^\mI$ for $x \in V$. Similarly, each role name $r$ is used as an edge label, with $E(x,r,y) = r^\mI(x,y)$ for $x, y \in V$, and if $I \in \Phi$, then the inverse $\cnv{r}$ of each role name $r$ is also used as an edge label, with $E(x,\cnv{r},y) = r^\mI(y,x)$ for $x, y \in V$. 
Next, Algorithm~\ref{algCompCB} is executed to compute the partition $\bbP$ that corresponds to the largest bisimulation of~$G$. Then, the fuzzy interpretation $\mJ$ intended for the result $\mIsimP$ is specified as follows: each block of $\bbP$ forms (identifies) an element of $\Delta^\mJ$; for each individual name $a$, $a^\mJ$ is the block of $\bbP$ that contains $a^\mI$; for each concept name $A$ and each $X \in \Delta^\mJ$, $A^\mJ(X) = A^\mI(x)$ for an arbitrary $x \in X$; for each role name $r$ and each $X,Y \in \Delta^\mJ$, $r^\mJ(X,Y) = \sup E(x,r,Y)$ for an arbitrary $x \in X$. Since $\bbP$ is stable, the mentioned values $A^\mI(x)$ and $\sup E(x,r,Y)$ are independent from the choice of $x \in X$, and the first element of the block $X$ can be taken as~$x$. For the representation of $\mJ$, we keep only nonzero instances of roles. This means that, for $X, Y \in \Delta^\mJ$, the information about $r^\mJ(X,Y)$ (in the form of a pair $\tuple{Y,r^\mJ(X,Y)}$ in a list involved with $X$, called the adjacency list of $X$) is kept only if $r^\mJ(X,Y) > 0$. 

\begin{figure}
\begin{center}
\begin{tabular}{|l|l|}
\hline
\begin{tikzpicture}[->,>=stealth,auto]
\node (a) {$a\,(o)$};
\node (I) [node distance=2.0cm, right of=a] {$\mI$};
\node (b) [node distance=2.0cm, below of=a] {$b$};
\node (c) [node distance=2.0cm, right of=b] {$c$};
\node (d) [node distance=2.0cm, below of=b] {$d$};
\node (e) [node distance=2.0cm, right of=d] {$e$};
\draw (a) to node[right]{0.8} (b);
\draw (b) to node[above]{0.7} (c);
\draw (b) to node[left]{1} (d);
\draw (c) to node[right]{1} (e);
\draw (d) edge [bend left=20] node[above]{1} (e);
\draw (e) edge [bend left=20] node[below]{1} (d);
\node (a2) [node distance=4.0cm, right of=a] {$a_2$};
\node (b2) [node distance=3.0cm, right of=b] {$b_2$};
\node (b3) [node distance=2.0cm, right of=b2] {$b_3$};
\draw (a2) to node[left]{0.8} (b2);
\draw (a2) to node[right]{0.8} (b3);
\draw (b2) edge[loop below,out=-120,in=-60,looseness=10] node{$1$} (b2);
\draw (b3) edge[loop below,out=-120,in=-60,looseness=10] node{$1$} (b3);
\end{tikzpicture}
&
\begin{tikzpicture}[->,>=stealth,auto]
\node (a) {$\{a\}\,(o)$};
\node (J3) [node distance=2.0cm, right of=a] {$\mJ_3$};
\node (b) [node distance=2.0cm, below of=a] {$\{b\}$};
\node (c) [node distance=2.0cm, right of=b] {$\{c\}$};
\node (d) [node distance=2.0cm, below of=b] {$\{d\}$};
\node (e) [node distance=2.0cm, right of=d] {$\{e\}$};
\draw (a) to node[right]{0.8} (b);
\draw (b) to node[above]{0.7} (c);
\draw (b) to node[left]{1} (d);
\draw (c) to node[right]{1} (e);
\draw (d) edge [bend left=20] node[above]{1} (e);
\draw (e) edge [bend left=20] node[below]{1} (d);
\node (a2) [node distance=4.0cm, right of=a] {$\{a_2\}$};
\node (b23) [node distance=2.0cm, below of=a2] {$\{b_2,b_3\}$};
\draw (a2) to node[right]{0.8} (b23);
\draw (b23) edge[loop below,out=-120,in=-60,looseness=10] node{$1$} (b23);
\end{tikzpicture}
\\
\hline
\begin{tikzpicture}[->,>=stealth,auto]
\node (J1) {$\mJ_1$};
\node (a) [node distance=2.8cm, right of=J1] {$\{a, a_2\}\,(o)$};
\node (b) [node distance=2.0cm, below of=a] {$\{b,b_2,b_3,c,d,e\}$};
\draw (a) to node[right]{0.8} (b);
\draw (b) edge[loop below,out=-120,in=-60,looseness=10] node{$1$} (b);
\end{tikzpicture}
&
\begin{tikzpicture}[->,>=stealth,auto]
\node (J2) {$\mJ_2$};
\node (a) [node distance=1.5cm, right of=J2] {$\{a\}\,(o)$};
\node (btwa) [node distance=1.0cm, right of=a] {};
\node (a2) [node distance=1.0cm, right of=btwa] {$\{a_2\}$};
\node (b) [node distance=2.0cm, below of=btwa] {$\{b,b_2,b_3,c,d,e\}$};
\draw (a) to node[left]{0.8} (b);
\draw (a2) to node[left]{0.8} (b);
\draw (b) edge[loop below,out=-120,in=-60,looseness=10] node{$1$} (b);
\end{tikzpicture}
\\
\hline
\end{tabular}
\caption{An illustration for Example~\ref{example: IUFDJ}.\label{fig: GDWIS}}
\end{center}
\end{figure}
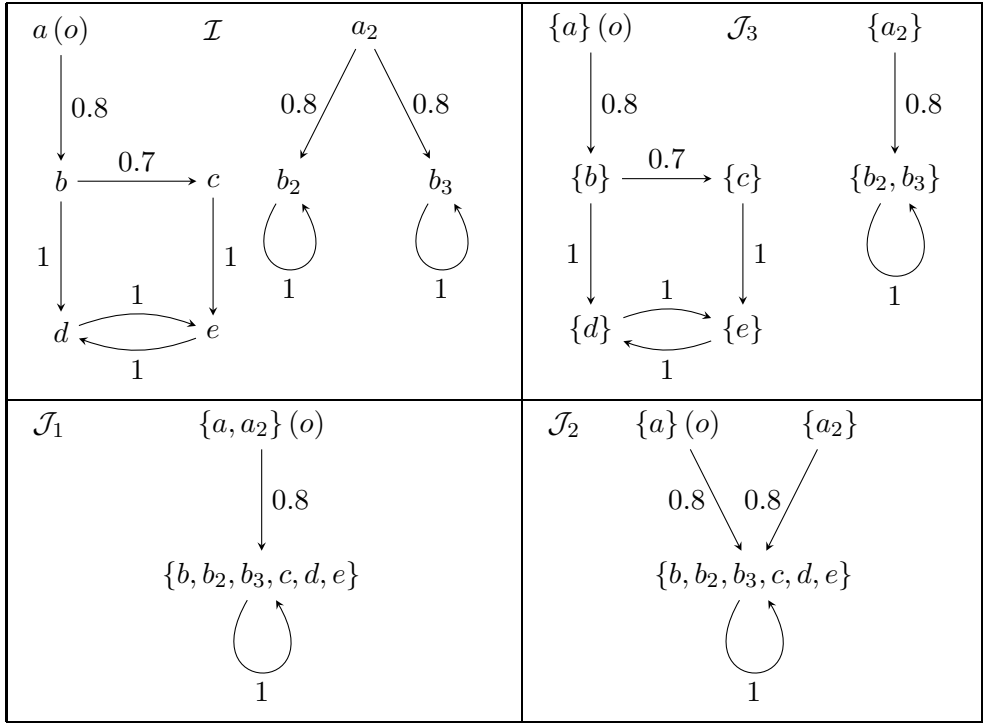

\begin{example}\label{example: IUFDJ}
Let $\leq$ be the usual order on $L = [0,1]$ and let $\CN = \emptyset$, $\RN = \{r\}$ and $\IN = \{o\}$. 
Consider the fuzzy interpretation $\mI$ that is illustrated in the top left corner of Figure~\ref{fig: GDWIS} and specified by: \mbox{$\Delta^\mI = \{a,b,c,d,e,a_2,b_2,b_3\}$}, $o^\mI = a$, $r^\mI = \{\tuple{a,b}\!:\!0.8$, $\tuple{a_2,b_2}\!:\!0.8$, $\tuple{a_2,b_3}\!:\!0.8$, $\tuple{b,c}\!:\!0.7$, $\tuple{b,d}\!:\!1$, $\tuple{c,e}\!:\!1$, $\tuple{d,e}\!:\!1$, $\tuple{e,d}\!:\!1$, $\tuple{b_2,b_2}\!:\!1$, $\tuple{b_3,b_3}\!:\!1\}$. Consider the following cases of $\Phi$ in comparison with $\Psi = \{\triangle, \circ, \mor_r, *, ?, U\}$.
\begin{itemize}
\item Case $\Phi = \Psi$: Consider the execution of Algorithm~\ref{alg: minInt} for $\mI$ and $\Phi$. Let $G = \tuple{V$, $E$, $\Label$, $\SV$, $\SE}$ be the fuzzy graph constructed by the statement~\ref{step: JHDJH 1}. We have $\SV = \emptyset$, $\SE = \{r\}$, $V = \Delta^\mI$ and $E = \{\tuple{x,r,y}\!:\!p \mid$ $x,y \in V$, $p = r^\mI(x,y)\}$. Executing Algorithm~\ref{algCompCB} for $G$, the initialization sets $\bbP = \bbP_0 = \{\{a,a_2\},\{b,b_2,b_3,c,d,e\}\}$ and $\bbQ_r = \{V\}$. The first iteration of the ``while'' loop of Algorithm~\ref{algCompCB} uses $Y = V$ and $Y' = \{a,a_2\}$. It does not change $\bbP$, but refines $\bbQ_r$ to $\bbP$. Thus, the loop terminates after the first iteration. Let's denote $u = \{a, a_2\}$ and $v = \{b, b_2, b_3, c, d, e\}$. Then, the partition returned by Algorithm~\ref{algCompCB} is $\bbP = \{u, v\}$. The fuzzy interpretation $\mJ$ specified by the statement~\ref{step: JHDJH 3} of Algorithm~\ref{alg: minInt} has $\Delta^\mJ = \bbP$, $o^\mJ = u$ and $r^\mJ = \{\tuple{u,v}\!:\!0.8, \tuple{v,v}\!:1\}$. It is illustrated as $\mJ_1$ in the bottom left corner of Figure~\ref{fig: GDWIS}.

\item Case $\Phi = \Psi \cup \{O\}$: Consider the execution of Algorithm~\ref{alg: minInt} for $\mI$ and $\Phi$. Let $G = \tuple{V$, $E$, $\Label$, $\SV$, $\SE}$ be the fuzzy graph constructed by the statement~\ref{step: JHDJH 1}. We have $\SV = \{o\}$, $\SE = \{r\}$, $V = \Delta^\mI$, $E = \{\tuple{x,r,y}\!:\!p \mid$ $x,y \in V$, $p = r^\mI(x,y)\}$, $\Label(a)(o) = 1$ and $\Label(x)(o) = 0$ for $x \in V \setminus \{a\}$. Executing Algorithm~\ref{algCompCB} for $G$, the initialization sets $\bbP = \bbP_0 = \{\{a\},\{a_2\},\{b,b_2,b_3,c,d,e\}\}$ and $\bbQ_r = \{V\}$. The first iteration of the ``while'' loop of Algorithm~\ref{algCompCB} uses $Y = V$ and either $\{a\}$ or $\{a_2\}$ for $Y'$. Assume that it uses $Y' = \{a\}$. Thus, it does not change $\bbP$, but refines $\bbQ_r$ to $\{\{a\},\{a_2,b,b_2,b_3,c,d,e\}\}$. The second iteration of the ``while'' loop of Algorithm~\ref{algCompCB} uses $Y = \{a_2,b,b_2,b_3,c,d,e\}$ and $Y' = \{a_2\}$. It does not change~$\bbP$, but refines $\bbQ_r$ to $\bbP$. Thus, the loop terminates after the second iteration. Let's denote $u = \{a\}$, $u_2 = \{a_2\}$ and $v = \{b, b_2, b_3, c, d, e\}$. Then, the partition returned by Algorithm~\ref{algCompCB} is $\bbP = \{u, u_2, v\}$. The fuzzy interpretation $\mJ$ specified by the statement~\ref{step: JHDJH 3} of Algorithm~\ref{alg: minInt} has $\Delta^\mJ = \bbP$, $o^\mJ = u$ and $r^\mJ = \{\tuple{u,v}\!:\!0.8, \tuple{u_2,v}\!:\!0.8, \tuple{v,v}\!:1\}$. It is illustrated as $\mJ_2$ in the bottom right corner of Figure~\ref{fig: GDWIS}.

\item Case $\Phi = \Psi \cup \{I\}$: Consider the execution of Algorithm~\ref{alg: minInt} for $\mI$ and $\Phi$. Let $G = \tuple{V$, $E$, $\Label$, $\SV$, $\SE}$ be the fuzzy graph constructed by the statement~\ref{step: JHDJH 1}. We have $\SV = \emptyset$, $\SE = \{r,\cnv{r}\}$, $V = \Delta^\mI$ and $E = \{\tuple{x,r,y}\!:\!p$, $\tuple{y,\cnv{r},x}\!:\!p \mid$ $x,y \in V$, $p = r^\mI(x,y)\}$. Executing Algorithm~\ref{algCompCB} for $G$, the initialization sets $\bbP = \bbP_0 = \{\{a, a_2\}, \{b\}, \{c\}, \{b_2, b_3, d, e\}\}$ and $\bbQ_r = \bbQ_{\cnv{r}} = \{V\}$. One of the possible runs of the ``while'' loop of Algorithm~\ref{algCompCB} has subsequent iterations with the effects described below.
	\begin{enumerate}
	\item $\bbP$ does not change by splitting w.r.t.\ $\tuple{Y',Y,r}$, where $Y = V$ and $Y' = \{a, a_2\}$. 
	$\bbQ_r$ is refined to $\{\{a$, $a_2\}$, $\{b$, $b_2$, $b_3$, $c$, $d$, $e\}\}$.

	\item $\bbP$ is split w.r.t.\ $\tuple{Y',Y,r}$, where $Y = \{b$, $b_2$, $b_3$, $c$, $d$, $e\}$ and $Y' = \{b\}$, into $\{\{a\}$, $\{a_2\}$, $\{b\}$, $\{b_2$, $b_3$, $d$, $e\}$, $\{c\}\}$. $\bbQ_r$ is refined to $\{\{a$, $a_2\}$, $\{b\}$, $\{b_2$, $b_3$, $c$, $d$, $e\}\}$.
	
	\item $\bbP$ does not change by splitting w.r.t.\ $\tuple{Y',Y,r}$, where $Y = \{a,a_2\}$ and $Y' = \{a\}$. $\bbQ_r$ is refined to $\{\{a\}$, $\{a_2\}$, $\{b\}$, $\{b_2$, $b_3$, $c$, $d$, $e\}\}$.

	\item $\bbP$ does not change by splitting w.r.t.\ $\tuple{Y',Y,r}$, where $Y = \{b_2$, $b_3$, $c$, $d$, $e\}$ and $Y' = \{c\}$. $\bbQ_r$ is refined to $\{\{a\}$, $\{a_2\}$, $\{b\}$, $\{b_2$, $b_3$, $d$, $e\}$, $\{c\}\}$.

	\item $\bbP$ is split w.r.t.\ $\tuple{Y',Y,\cnv{r}}$, where $Y = V$ and $Y' = \{a_2\}$, into $\{\{a\}$, $\{a_2\}$, $\{b\}$, $\{b_2$, $b_3\}$, $\{c\}$, $\{d$, $e\}\}$. $\bbQ_{\cnv{r}}$ is refined to $\{\{a$, $b$, $b_2$, $b_3$, $c$, $d$, $e\}$, $\{a_2\}\}$.

	\item $\bbP$ is split w.r.t.\ $\tuple{Y',Y,\cnv{r}}$, where $Y = \{a$, $b$, $b_2$, $b_3$, $c$, $d$, $e\}$ and $Y' = \{b\}$, into $\{\{a\}$, $\{a_2\}$, $\{b\}$, $\{b_2$, $b_3\}$, $\{c\}$, $\{d\}$, $\{e\}\}$. $\bbQ_{\cnv{r}}$ is refined to $\{\{a$, $b_2$, $b_3$, $c$, $d$, $e\}$, $\{a_2\}$, $\{b\}\}$.

	\item $\bbP$ does not change by splitting w.r.t.\ $\tuple{Y',Y,\cnv{r}}$, where $Y = \{a$, $b_2$, $b_3$, $c$, $d$, $e\}$ and $Y' = \{c\}$. $\bbQ_{\cnv{r}}$ is refined to $\{\{a$, $b_2$, $b_3$, $d$, $e\}$, $\{a_2\}$, $\{b\}$, $\{c\}\}$.

	\item $\bbP$ does not change by splitting w.r.t.\ $\tuple{Y',Y,\cnv{r}}$, where $Y = \{a$, $b_2$, $b_3$, $d$, $e\}$ and $Y' = \{a\}$. $\bbQ_{\cnv{r}}$ is refined to $\{\{a\}$, $\{a_2\}$, $\{b\}$, $\{b_2$, $b_3$, $d$, $e\}$, $\{c\}\}$.

	\item $\bbP$ does not change by splitting w.r.t.\ $\tuple{Y',Y,\cnv{r}}$, where $Y = \{b_2$, $b_3$, $d$, $e\}$ and $Y' = \{e\}$. $\bbQ_{\cnv{r}}$ is refined to $\{\{a\}$, $\{a_2\}$, $\{b\}$, $\{b_2$, $b_3$, $d\}$, $\{c\}$, $\{e\}\}$.

	\item $\bbP$ does not change by splitting w.r.t.\ $\tuple{Y',Y,\cnv{r}}$, where $Y = \{b_2$, $b_3$, $d\}$ and $Y' = \{d\}$. $\bbQ_{\cnv{r}}$ is refined to $\{\{a\}$, $\{a_2\}$, $\{b\}$, $\{b_2$, $b_3\}$, $\{c\}$, $\{d\}$, $\{e\}\}$.

	\item $\bbP$ does not change by splitting w.r.t.\ $\tuple{Y',Y,r}$, where $Y = \{b_2$, $b_3$, $d$, $e\}$ and $Y' = \{e\}$. $\bbQ_r$ is refined to $\{\{a\}$, $\{a_2\}$, $\{b\}$, $\{c\}$, $\{b_2$, $b_3$, $d\}$, $\{e\}\}$.

	\item $\bbP$ does not change by splitting w.r.t.\ $\tuple{Y',Y,r}$, where $Y = \{b_2$, $b_3$, $d\}$ and $Y' = \{d\}$. $\bbQ_r$ is refined to $\{\{a\}$, $\{a_2\}$, $\{b\}$, $\{c\}$, $\{b_2$, $b_3\}$, $\{d\}$, $\{e\}\}$.
	\end{enumerate}
After the 12th iteration, we have $\bbQ_r = \bbQ_{\cnv{r}} = \bbP$ and the ``while'' loop of Algorithm~\ref{algCompCB} terminates. The partition returned by Algorithm~\ref{algCompCB} is $\bbP = \{\{a\}$, $\{a_2\}$, $\{b\}$, $\{c\}$, $\{b_2$, $b_3\}$, $\{d\}$, $\{e\}\}$. 
The fuzzy interpretation $\mJ$ specified by the statement~\ref{step: JHDJH 3} of Algorithm~\ref{alg: minInt} has $\Delta^\mJ = \bbP$, $o^\mJ = \{a\}$ and 
\begin{eqnarray*}
r^\mJ & = & \{\tuple{\{x\},\{y\}}\!:\!p \mid x,y \in \{a,b,c,d,e\},\ p = r^\mI(x,y) > 0\} \cup \\ 
      &   &\{\tuple{\{a_2\},\{b_2,b_3\}}\!:\!0.8, \tuple{\{b_2,b_3\},\{b_2,b_3\}}\!:\!1\}.
\end{eqnarray*}
It is illustrated as $\mJ_3$ in the top right corner of Figure~\ref{fig: GDWIS}.

\item Case $\Phi = \Psi \cup \{I,O\}$: Executing Algorithm~\ref{alg: minInt} for $\mI$ and $\Phi$ results in the same partition as in the above case, which is illustrated as $\mJ_3$ in the top right corner of Figure~\ref{fig: GDWIS}.
\myend
\end{itemize}
\end{example}

\begin{lemma}\label{lemma: HDHAO}
Let $G = \tuple{V, E, \Label, \SV, \SE}$ be the fuzzy graph constructed during the run of Algorithm~\ref{alg: minInt} for a finite fuzzy interpretation $\mI$ and $\{\triangle\} \subseteq \Phi \subseteq \{\triangle, \circ, \mor_r, *, ?, I, U, O\}$. Then, a reflexive binary relation $Z \subseteq \Delta^\mI \times \Delta^\mI$ is a $\Phi$-auto-bisimulation of $\mI$ iff it is a bisimulation of~$G$.
\end{lemma}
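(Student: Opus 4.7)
The plan is to show that the construction of $G$ in step~\ref{step: JHDJH 1} of Algorithm~\ref{alg: minInt} sets up a bijective correspondence between the conditions defining a $\Phi$-auto-bisimulation of $\mI$ and the conditions defining a bisimulation of $G$, under the reflexivity hypothesis on $Z$. Since $V = \Delta^\mI$, the relation $Z$ sits simultaneously in both settings, so the argument reduces to matching the defining conditions one by one.

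First, I would handle the vertex-label conditions. By construction, $\Label(x)(A) = A^\mI(x)$ for every $A \in \CN$, and moreover, when $O \in \Phi$, $\Label(x)(a) = 1$ iff $x = a^\mI$ for every $a \in \IN$. So the single equation $\Label(x) = \Label(x')$ in~\eqref{eq: CB 1} decomposes into: the condition $A^\mI(x) = A^\mI(x')$ for all $A \in \CN$, i.e.~\eqref{eq: FB 2}, together with (when $O \in \Phi$) the equivalence $x = a^\mI \Leftrightarrow x' = a^\mI$ for all $a \in \IN$, i.e.~\eqref{eq: FS 4bis}. Hence the vertex-label halves of the two definitions coincide exactly in the presence (or absence) of $O$.

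Next, I would handle the edge conditions. Depending on whether $I \in \Phi$, the edge-label set $\SE$ equals either $\RN$ or $\RN \cup \{\cnv{r} \mid r \in \RN\}$, which is precisely the set of basic roles w.r.t.\ $\Phi$. Moreover $E(x,R,y) = R^\mI(x,y)$ for every basic role $R$ w.r.t.\ $\Phi$. Therefore, when the quantifier ranges over $R \in \SE$ in~\eqref{eq: CB 2} and~\eqref{eq: CB 3}, those conditions become word-for-word identical to~\eqref{eq: FS 3} and~\eqref{eq: FS 3b} quantified over basic roles w.r.t.\ $\Phi$.

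The only remaining pieces are the universal-role conditions~\eqref{eq: FS 6} and~\eqref{eq: FS 6b} in the $\Phi$-auto-bisimulation definition when $U \in \Phi$, which have no counterpart in the bisimulation of $G$. Here the reflexivity of $Z$ is the key: for every $x \in \Delta^\mI$ we have $Z(x,x)$, which directly witnesses the existentials in both~\eqref{eq: FS 6} and~\eqref{eq: FS 6b} (taking $x' = x$ and $x = x'$ respectively). Reflexivity also ensures $Z \neq \emptyset$, which is the standing non-emptiness assumption in the definition of a bisimulation of $G$. With this observation both implications follow by assembling the matched conditions above. I do not expect any serious obstacle; the only care needed is to treat the optional features $I$, $O$, $U$ according to whether they belong to $\Phi$, and to note that the $U$-conditions truly become vacuous precisely because of the reflexivity hypothesis (which is why the lemma restricts to reflexive relations rather than to arbitrary ones).
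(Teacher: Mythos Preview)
Your proposal is correct and follows essentially the same approach as the paper's proof: both match the vertex-label condition~\eqref{eq: CB 1} with~\eqref{eq: FB 2} together with~\eqref{eq: FS 4bis} (when $O \in \Phi$), match the edge conditions~\eqref{eq: CB 2}--\eqref{eq: CB 3} with~\eqref{eq: FS 3}--\eqref{eq: FS 3b} via the identification of $\SE$ with the set of basic roles, and use reflexivity of $Z$ to discharge~\eqref{eq: FS 6}--\eqref{eq: FS 6b} when $U \in \Phi$. Your explicit remark that reflexivity also guarantees $Z \neq \emptyset$ (needed for the graph-bisimulation definition) is a nice touch that the paper leaves implicit.
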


\begin{proof}
Let $Z$ be a reflexive binary relation on $\Delta^\mI$.

Consider the left-to-right implication and suppose $Z$ is a $\Phi$-auto-bisimulation of $\mI$. We prove that $Z$ is a bisimulation of~$G$ by showing that it satisfies Conditions~\eqref{eq: CB 1}-\eqref{eq: CB 3}. 
Since $Z$ is a $\Phi$-auto-bisimulation of $\mI$, it satisfies Conditions~\eqref{eq: FB 2}-\eqref{eq: FS 3b} and \eqref{eq: FS 4bis} (when $O \in \Phi$) for $\mI' = \mI$. 
Condition~\eqref{eq: CB 2} directly follows from Condition~\eqref{eq: FS 3} for $\mI' = \mI$, while Condition~\eqref{eq: CB 3} directly follows from Condition~\eqref{eq: FS 3b} for $\mI' = \mI$. 
Consider Condition~\eqref{eq: CB 1} and suppose $Z(x,x')$ holds. By Conditions~\eqref{eq: FB 2} and~\eqref{eq: FS 4bis} for $\mI' = \mI$, $A^\mI(x) = A^\mI(x')$ for all $A \in \CN$, and when $O \in \Phi$, $x = a^\mI$ iff $x' = a^\mI$, for all $a \in \IN$. Therefore, $\Label(x) = \Label(x')$ and Condition~\eqref{eq: CB 1} holds.  

Consider the right-to-left implication and suppose $Z$ is a bisimulation of $G$. We prove that $Z$ is a $\Phi$-auto-bisimulation of $\mI$ by showing that it satisfies Conditions \eqref{eq: FB 2}-\eqref{eq: FS 6b} for $\mI' = \mI$ (in the corresponding cases specified by $\Phi$). Since $Z$ is a bisimulation of $G$, it satisfies Conditions~\eqref{eq: CB 1}-\eqref{eq: CB 3} for all $r \in \SE$, which can be $\cnv{s}$ for $s \in \RN$ when $I \in \Phi$. Conditions~\eqref{eq: FB 2}-\eqref{eq: FS 3b} for $\mI' = \mI$ directly follow from Conditions~\eqref{eq: CB 1}-\eqref{eq: CB 3}, respectively. 
When $O \in \Phi$, Condition~\eqref{eq: FS 4bis} for $\mI' = \mI$ directly follows from Condition~\eqref{eq: CB 1}. 
When $U \in \Phi$, Conditions~\eqref{eq: FS 6} and~\eqref{eq: FS 6b} for $\mI' = \mI$ directly follow from the assumption that $Z$ is reflexive.
\myend
\end{proof}

\begin{theorem}
Algorithm~\ref{alg: minInt} is correct and can be implemented to run in time of the order $O((m \log{l} + n) \log{n})$ when zero-instances of roles are not explicitly stored, where $n = |\Delta^\mI|$, $m$ is number of nonzero instances of atomic roles of $\mI$ and $l$ is the number of different fuzzy values used for instances of atomic roles of $\mI$. The sizes of $\CN$, $\RN$ and $\IN$ are assumed to be constants. 
\end{theorem}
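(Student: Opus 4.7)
The plan is to split the theorem into correctness and complexity, and to reduce both to the already-established properties of Algorithm~\ref{algCompCB} via Lemma~\ref{lemma: HDHAO}. First I would argue that the fuzzy graph $G$ built in step~\ref{step: JHDJH 1} encodes $\mI$ faithfully with respect to $\Phi$, so that the largest bisimulation of $G$ is exactly the equivalence relation underlying $\simPI$. Then I would invoke the correctness of Algorithm~\ref{algCompCB} together with the complexity bound $O((m\log l + n)\log n)$ proved for it in~\cite{abs-2010-15671}, and check that the pre- and post-processing performed in Algorithm~\ref{alg: minInt} also fit inside this bound.

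For correctness, I would observe that by Lemma~\ref{lemma: HDHAO} a reflexive $Z$ is a $\Phi$-auto-bisimulation of $\mI$ iff it is a bisimulation of $G$; hence the largest bisimulation of $G$, which is automatically reflexive, coincides with $\simPI$, and by the correctness of Algorithm~\ref{algCompCB} the partition $\bbP$ obtained in step~\ref{step: JHDJH 2} consists precisely of the equivalence classes $[x]_{\simPI}$. Since $\bbP$ is stable with respect to itself, the quantities $A^\mI(x)$ and $\sup E(x,r,Y)$ used in step~\ref{step: JHDJH 3} do not depend on the choice of representative $x$ in a block $X \in \bbP$. Comparing with Definition~\ref{def: HDJOS} would then yield $\mJ = \mIsimP$, up to the obvious identification of each block of $\bbP$ with the corresponding equivalence class.

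For the complexity bound, I would first note that the sizes $|\CN|$, $|\RN|$, $|\IN|$ are constant, so $G$ has $n$ vertices, at most $2m$ nonzero edges (the factor $2$ accounts for the inverse edges added when $I \in \Phi$), and at most $l$ distinct nonzero edge-values (inverses reuse the same values). Therefore step~\ref{step: JHDJH 1} runs in $O(n+m)$ time on the adjacency-list representation of $\mI$; step~\ref{step: JHDJH 2} runs in $O((m\log l + n)\log n)$ by the main result of~\cite{abs-2010-15671}; and step~\ref{step: JHDJH 3} can be done by a single scan over the nonzero edges of $G$, where for each such edge $\tuple{x,r,y}$ I would look up in $O(1)$ the blocks $X,Y$ containing $x,y$ and read off $r^\mJ(X,Y) = \sup E(x,r,Y)$ in $O(\log l)$ using the auxiliary data structure from~\cite{abs-2010-15671}. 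The resulting post-processing cost $O((n+m)\log l)$ is absorbed by the bound for step~\ref{step: JHDJH 2}.

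The main obstacle will be the careful book-keeping in step~\ref{step: JHDJH 3}: one must enumerate each nonzero triple $\tuple{X,r,Y}$ exactly once (or with constant amortized overhead per repetition) so as not to inflate the running time, and one must justify that skipping the zero-valued entries $r^\mJ(X,Y)$ is legitimate under the convention that zero-instances are not stored. I expect to handle the enumeration by iterating over the nonzero edges of $G$ and using a mark bit per triple to avoid double assignment, and to justify the skipping by the observation that $r^\mJ(X,Y) = 0$ exactly when no nonzero $r$-edge of $G$ from $X$ to $Y$ is witnessed, which is precisely what the edge list already encodes.
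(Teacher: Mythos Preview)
Your proposal is correct and follows essentially the same route as the paper: reduce correctness to Lemma~\ref{lemma: HDHAO} plus the correctness of Algorithm~\ref{algCompCB}, and reduce complexity to the $O((m\log l + n)\log n)$ bound from~\cite{abs-2010-15671} together with linear-time pre-processing and $O(n + m\log l)$ post-processing using the auxiliary block/$\sup$ data structures. Your extra remarks about stability guaranteeing representative-independence and about avoiding duplicate output triples in step~\ref{step: JHDJH 3} are sound elaborations the paper leaves implicit.
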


\begin{proof}
Let $\mI$ be a finite fuzzy interpretation and let $\{\triangle\} \subseteq \Phi \subseteq \{\triangle, \circ, \mor_r, *, ?, I, U, O\}$. Let $\mJ$ be the fuzzy interpretation returned by Algorithm~\ref{alg: minInt} for $\mI$ and $\Phi$. We prove that $\mJ = \mIsimP$. Let $G$ be the fuzzy graph and $\bbP$ the partition (of $\Delta^\mI$) constructed during the run of the algorithm. By the construction of $\mJ$ and $\mIsimP$, it is sufficient to prove that $\bbP$ is the partition that corresponds to the equivalence relation $\simPI$. Since Algorithm~\ref{algCompCB} is correct \cite[Theorem~3.6]{abs-2010-15671}, $\bbP$ is the partition that corresponds to the largest bisimulation of~$G$. By Lemma~\ref{lemma: HDHAO}, it follows that $\bbP$ is the partition that corresponds to the equivalence relation $\simPI$ (the largest $\Phi$-auto-bisimulation of $\mI$).

Let $n$, $m$ and $l$ be the values mentioned in the theorem. Clearly, $n$ is the number of vertices of $G$, $m$ or $2m$ is the number of nonzero edges of $G$ (depending on whether $I \in \Phi$), and $l$ is the number of different fuzzy degrees of edges of~$G$. Furthermore, the construction of $G$ from $\mI$ can be done in time $O(m+n)$, when zero-edges of $G$ are not explicitly stored.  Consider the step~\ref{step: minInt 2} of Algorithm~\ref{alg: minInt}. As mentioned before, the work~\cite{abs-2010-15671} has specified how to implement Algorithm~\ref{algCompCB} so that its time complexity is $O((m \log{l} + n) \log{n})$. The implementation uses data structures that allow us to get $\sup E(x,r,Y)$ in time $O(\log{l})$ for every $x, y \in V$ and $r \in \SE$ with $E(x,r,y) > 0$, where $Y$ is the block of $\bbP$ with $y \in Y$. The data structures also allow us to get in constant time the block of the resulting partition that contains a given vertex $x \in V$, as well as an element of a given block of the resulting partition. These facts imply that, when zero-instances of roles are not explicitly stored, the construction of $\mJ$ from $\bbP$ and the related data structures can be done in time $O(n + m\log{l})$. Summing up, Algorithm~\ref{alg: minInt} can be implemented to run in time $O((m \log{l} + n) \log{n})$.
\myend
\end{proof}

We have implemented Algorithm~\ref{alg: minInt} in Python~\cite{Minimization2023Code} appropriately so that its complexity order is \mbox{$O((m \log{l} + n) \log{n})$}. We have tested the correctness of the obtained program by implementing also a naive algorithm for the same problem and comparing their runs on many randomly generated inputs. Examples like~\ref{example: HDJQK3} and~\ref{example: IUFDJ} can be checked by using the program~\cite{Minimization2023Code} with the options ``verbose'' and ``with\_compound\_names''.  


\section{Conclusions}
\label{sec: conc}

The results of~\cite[Section~6]{FSS2020} on minimizing finite fuzzy interpretations in FDLs are formulated and proved only for the case of using the G\"odel family of fuzzy operators on the unit interval $[0,1]$ together with involutive negation. Besides, that work does not provide any algorithm for the minimization task. In this article, we have generalized the results of~\cite[Section~6]{FSS2020} for a much larger class of FDLs. 
Namely, the considered \FDLs use the Baaz projection operator instead of involutive negation and their semantics is specified by using an abstract algebra of fuzzy truth values, which can be any linear and complete residuated lattice. 
Furthermore, we have provided an efficient algorithm with a complexity of $O((m \log{l} + n) \log{n})$ for computing the quotient fuzzy interpretation $\mIsimP$ from a given finite fuzzy interpretation $\mI$, where $n$ is the size of the domain of $\mI$, $m$ is number of nonzero instances of atomic roles of $\mI$ and $l$ is the number of different fuzzy values used for instances of atomic roles of~$\mI$. 
We have proved that $\mIsimP$ is minimal among the ones that preserve fuzzy TBoxes and ABoxes under certain conditions.

The considered class of FDLs is rich, as it allows various role constructors as well as nominals (a concept constructor). However, to increase the simplicity, we did not consider (qualified/unqualified) number restrictions nor the concept constructor $\E r.\Self$. 
In~\cite{BSDL-INS,thesis-ARD}, Divroodi and Nguyen studied the problem of minimizing crisp interpretations in description logics also for the cases with number restrictions and/or the concept constructor $\E r.\Self$. They introduced the notion of a QS-interpretation that allows ``multi-edges'' and keeps information about ``self-edges'' (where ``edge'' is understood as an instance of a role). Minimizing fuzzy interpretations in FDLs with number restrictions and/or the concept constructor $\E r.\Self$ can be done by using fuzzy QS-interpretations and exploiting the results of the recent works~\cite{TFS2022,abs-2010-15671}. 

We have focused on minimizing a finite fuzzy interpretation in a FDL with the Baaz projection operator by using the largest crisp auto-bisimulation. The problem of minimizing a finite fuzzy interpretation in a FDL without the Baaz projection operator (and involutive negation) by using the greatest fuzzy auto-bisimulation, as in~\cite{minimization-by-fBS} but for a more general semantics (e.g., using any linear and complex residuated lattice), is more challenging and remains as future work.


\bibliography{BSfDL}

\begin{thebibliography}{10}
\expandafter\ifx\csname url\endcsname\relax
  \def\url#1{\texttt{#1}}\fi
\expandafter\ifx\csname urlprefix\endcsname\relax\def\urlprefix{URL }\fi
\expandafter\ifx\csname href\endcsname\relax
  \def\href#1#2{#2} \def\path#1{#1}\fi

\bibitem{dlbook}
F.~Baader, D.~Calvanese, D.~McGuinness, D.~Nardi, P.~Patel-Schneider (Eds.),
  Description Logic Handbook, Cambridge University Press, 2002.

\bibitem{BobilloCEGPS2015}
F.~Bobillo, M.~Cerami, F.~Esteva, {\'A}.~Garc{\'i}a-Cerda{\~n}a,
  R.~Pe{\~n}aloza, U.~Straccia, Fuzzy description logics, in: Handbook of
  Mathematical Fuzzy Logic, Volume 3, Vol.~58 of Studies in Logic, Mathematical
  Logic and Foundations, College Publications, 2015, pp. 1105--1181.

\bibitem{BorgwardtP17b}
S.~Borgwardt, R.~Pe{\~{n}}aloza, Fuzzy description logics - {A} survey, in:
  Proceedings of {SUM}'2017, Vol. 10564 of LNCS, Springer, 2017, pp. 31--45.

\bibitem{Hajek1998}
P.~H{\'a}jek, Metamathematics of Fuzzy Logics, Kluwer Academic Publishers,
  1998.

\bibitem{Belohlavek2002}
R.~B\v{e}lohl{\'a}vek, Fuzzy Relational Systems: Foundations and Principles,
  Kluwer, 2002.

\bibitem{fss/Nguyen22}
L.~Nguyen, Logical characterizations of fuzzy bisimulations in fuzzy modal
  logics over residuated lattices, Fuzzy Sets Syst. 431 (2022) 70--93.
\newblock \href {https://doi.org/10.1016/j.fss.2021.08.009}
  {\path{doi:10.1016/j.fss.2021.08.009}}.

\bibitem{TFS2022}
L.~Nguyen, N.~Nguyen, Logical characterizations of crisp bisimulations in fuzzy
  description logics, IEEE Transactions on Fuzzy Systems (2022).
\newblock \href {https://doi.org/10.1109/TFUZZ.2022.3198853}
  {\path{doi:10.1109/TFUZZ.2022.3198853}}.

\bibitem{Fan15}
T.-F. Fan, Fuzzy bisimulation for {G\"{o}del} modal logic, {IEEE} Trans. Fuzzy
  Systems 23~(6) (2015) 2387--2396.
\newblock \href {https://doi.org/10.1109/TFUZZ.2015.2426724}
  {\path{doi:10.1109/TFUZZ.2015.2426724}}.

\bibitem{FSS2020}
L.~Nguyen, Q.-T. Ha, N.~Nguyen, T.~Nguyen, T.-L. Tran, Bisimulation and
  bisimilarity for fuzzy description logics under the {G\"odel} semantics,
  Fuzzy Sets and Systems 388 (2020) 146--178.
\newblock \href {https://doi.org/10.1016/j.fss.2019.08.004}
  {\path{doi:10.1016/j.fss.2019.08.004}}.

\bibitem{CaoCK11}
Y.~Cao, G.~Chen, E.~Kerre, Bisimulations for fuzzy-transition systems, {IEEE}
  Trans. Fuzzy Systems 19~(3) (2011) 540--552.
\newblock \href {https://doi.org/10.1109/TFUZZ.2011.2117431}
  {\path{doi:10.1109/TFUZZ.2011.2117431}}.

\bibitem{CaoSWC13}
Y.~Cao, S.~Sun, H.~Wang, G.~Chen, A behavioral distance for fuzzy-transition
  systems, {IEEE} Trans. Fuzzy Systems 21~(4) (2013) 735--747.
\newblock \href {https://doi.org/10.1109/TFUZZ.2012.2230177}
  {\path{doi:10.1109/TFUZZ.2012.2230177}}.

\bibitem{DBLP:journals/fss/WuCBD18}
H.~Wu, Y.~Chen, T.~Bu, Y.~Deng, Algorithmic and logical characterizations of
  bisimulations for non-deterministic fuzzy transition systems, Fuzzy Sets
  Syst. 333 (2018) 106--123.
\newblock \href {https://doi.org/10.1016/j.fss.2017.02.008}
  {\path{doi:10.1016/j.fss.2017.02.008}}.

\bibitem{DBLP:journals/fss/WuD16}
H.~Wu, Y.~Deng, Logical characterizations of simulation and bisimulation for
  fuzzy transition systems, Fuzzy Sets Syst. 301 (2016) 19--36.
\newblock \href {https://doi.org/10.1016/j.fss.2015.09.012}
  {\path{doi:10.1016/j.fss.2015.09.012}}.

\bibitem{DamljanovicCI14}
N.~Damljanovi\'c, M.~\'Ciri\'c, J.~Ignjatovi\'c, Bisimulations for weighted
  automata over an additively idempotent semiring, Theor. Comput. Sci. 534
  (2014) 86--100.
\newblock \href {https://doi.org/10.1016/j.tcs.2014.02.032}
  {\path{doi:10.1016/j.tcs.2014.02.032}}.

\bibitem{EleftheriouKN12}
P.~Eleftheriou, C.~Koutras, C.~Nomikos, Notions of bisimulation for
  {Heyting}-valued modal languages, J. Log. Comput. 22~(2) (2012) 213--235.
\newblock \href {https://doi.org/10.1093/logcom/exq005}
  {\path{doi:10.1093/logcom/exq005}}.

\bibitem{aml/MartiM18}
M.~Marti, G.~Metcalfe, Expressivity in chain-based modal logics, Arch. Math.
  Log. 57~(3-4) (2018) 361--380.
\newblock \href {https://doi.org/10.1007/s00153-017-0573-4}
  {\path{doi:10.1007/s00153-017-0573-4}}.

\bibitem{fuin/Diaconescu20}
D.~Diaconescu, Modal equivalence and bisimilarity in many-valued modal logics
  with many-valued accessibility relations, Fundam. Informaticae 173~(2-3)
  (2020) 177--189.
\newblock \href {https://doi.org/10.3233/FI-2020-1920}
  {\path{doi:10.3233/FI-2020-1920}}.

\bibitem{CiricIDB12}
M.~{\'C}iri{\'c}, J.~Ignjatovi{\'c}, N.~Damljanovi{\'c}, M.~Ba\u{s}ic,
  Bisimulations for fuzzy automata, Fuzzy Sets and Systems 186~(1) (2012)
  100--139.
\newblock \href {https://doi.org/10.1016/j.fss.2011.07.003}
  {\path{doi:10.1016/j.fss.2011.07.003}}.

\bibitem{AFB2020}
S.~Stanimirovi{\'c}, I.~Mici{\'c}, M.~{\'C}iri{\'c}, Approximate bisimulations
  for fuzzy automata over complete {Heyting} algebras, {IEEE} Trans. Fuzzy
  Syst. 30~(2) (2022) 437--447.
\newblock \href {https://doi.org/10.1109/TFUZZ.2020.3039968}
  {\path{doi:10.1109/TFUZZ.2020.3039968}}.

\bibitem{IJA2023}
L.~Nguyen, Fuzzy simulations and bisimulations between fuzzy automata,
  International Journal of Approximate Reasoning 155 (2023) 113--131.
\newblock \href {https://doi.org/https://doi.org/10.1016/j.ijar.2023.02.002}
  {\path{doi:https://doi.org/10.1016/j.ijar.2023.02.002}}.

\bibitem{ai/FanL14}
T.~Fan, C.~Liau, Logical characterizations of regular equivalence in weighted
  social networks, Artif. Intell. 214 (2014) 66--88.
\newblock \href {https://doi.org/10.1016/j.artint.2014.05.007}
  {\path{doi:10.1016/j.artint.2014.05.007}}.

\bibitem{IgnjatovicCS15}
J.~Ignjatovi{\'c}, M.~{\'C}iri{\'c}, I.~Stankovi{\'c}, Bisimulations in fuzzy
  social network analysis, in: Proceedings of IFSA-EUSFLAT-15, Atlantis Press,
  2015.

\bibitem{abs-2010-15671}
L.~Nguyen, D.~Tran, \href{https://arxiv.org/abs/2010.15671v3}{Computing crisp
  bisimulations for fuzzy structures} (2023).
\newline\urlprefix\url{https://arxiv.org/abs/2010.15671v3}

\bibitem{INS2023}
L.~Nguyen, Computing the fuzzy partition corresponding to the greatest fuzzy
  auto-bisimulation of a fuzzy graph-based structure under the {G\"odel}
  semantics, Information Sciences (2023).
\newblock \href {https://doi.org/https://doi.org/10.1016/j.ins.2023.02.029}
  {\path{doi:https://doi.org/10.1016/j.ins.2023.02.029}}.

\bibitem{Hopcroft71}
J.~Hopcroft, An $n\log{n}$ algorithm for minimizing states in a finite
  automaton, Available at
  \url{ftp://reports.stanford.edu/pub/cstr/reports/cs/tr/71/190/CS-TR-71-190.pdf}
  (1971).

\bibitem{PaigeT87}
R.~Paige, R.~Tarjan, Three partition refinement algorithms, SIAM J. Comput.
  16~(6) (1987) 973--989.
\newblock \href {https://doi.org/10.1137/0216062} {\path{doi:10.1137/0216062}}.

\bibitem{minimization-by-fBS}
L.~Nguyen, N.-T. Nguyen, Minimizing interpretations in fuzzy description logics
  under the {G\"{o}del} semantics by using fuzzy bisimulations, Journal of
  Intelligent and Fuzzy Systems 37~(6) (2019) 7669--7678.
\newblock \href {https://doi.org/10.3233/JIFS-179371}
  {\path{doi:10.3233/JIFS-179371}}.

\bibitem{Baaz96}
M.~Baaz, Infinite-valued {G\"odel} logics with 0-1-projections and
  relativizations, in: {G\"odel'96}: Logical foundations of mathematics,
  computer science and physics---{Kurt G\"odel's} legacy, 1996, pp. 23--33.

\bibitem{Minimization2023Code}
L.~Nguyen, An implementation in {Python} of the agorithm provided in the
  current paper, Available at \url{www.mimuw.edu.pl/~nguyen/Minimization2023}
  (2023).

\bibitem{BSDL-INS}
A.~Divroodi, L.~Nguyen, On bisimulations for description logics, Information
  Sciences 295 (2015) 465--493.
\newblock \href {https://doi.org/10.1016/j.ins.2014.10.022}
  {\path{doi:10.1016/j.ins.2014.10.022}}.

\bibitem{thesis-ARD}
A.~Divroodi, Bisimulation equivalence in description logics and its
  applications, Ph.D. thesis, University of Warsaw (2015).

\end{thebibliography}
\bibliographystyle{elsarticle-num}


\end{document}